\documentclass[twoside,11pt]{article}

%

%
%
%

\usepackage[abbrvbib, preprint]{jmlr2e}

\usepackage[american]{babel}
\usepackage[utf8]{inputenc} 
\usepackage[T1]{fontenc}    
\usepackage{microtype}
\usepackage{graphicx}
\usepackage{booktabs} 
\usepackage{natbib}
\usepackage{hyperref}       
\usepackage{url}            
\usepackage{booktabs}       
\usepackage{amsfonts}       
\usepackage{nicefrac}       
\usepackage{microtype}      
\usepackage{amssymb}
\usepackage{amsmath}
\usepackage{mathtools}
\usepackage{graphicx}
\usepackage{subcaption}
\usepackage{float}
\usepackage{placeins}
\usepackage{xcolor}
\usepackage[ruled,vlined]{algorithm2e}
\usepackage{xr}
\usepackage{minitoc}
\usepackage[shortlabels]{enumitem}

\usepackage{thmtools,thm-restate}

\newcommand*\samethanks[1][\value{footnote}]{\footnotemark[#1]}

\newcommand{\norm}[1]{\left\lVert#1\right\rVert}

\addto\captionsamerican{%
  \renewcommand{\partname}%
    {}%
}    
\addto\captionsamerican{%
  \renewcommand{\thepart}%
    {}%
}


\jmlrheading{1}{2000}{1-48}{4/00}{10/00}{meila00a}{Sujay Thakur, Cooper Lorsung, Yaniv Yacoby, Finale Doshi-Velez and Weiwei Pan}


\ShortHeadings{Uncertainty-Aware Bases for Deep Bayesian Regression}{Thakur, Lorsung, Yacoby, Doshi-Velez \& Pan}
\firstpageno{1}

\begin{document}

\title{Uncertainty-Aware (UNA) Bases for Deep Bayesian Regression using Multi-Headed Auxiliary Networks}

\author{\name Sujay Thakur\thanks{Equal contribution} \email sujay\_thakur@g.harvard.edu \\
    \name Cooper Lorsung\samethanks \email clorsung@g.harvard.edu \\
    \name Yaniv Yacoby\samethanks \email yanivyacoby@g.harvard.edu \\
	\name Finale Doshi-Velez \email finale@seas.harvard.edu \\
	\name Weiwei Pan \email weiweipan@g.harvard.edu \\
       \addr John A. Paulson School of Engineering and Applied Sciences\\
       Harvard University\\
       Cambridge, MA 02138, USA
}

\editor{Kevin Murphy and Bernhard Sch{\"o}lkopf}

\maketitle

\doparttoc 
\faketableofcontents 

\begin{abstract}
Neural Linear Models (NLM) are deep Bayesian models that produce predictive uncertainties by learning features from the data and then performing Bayesian linear regression over these features. 
Despite their popularity, few works have focused on methodically evaluating the predictive uncertainties of these models.
In this work, we demonstrate that traditional training procedures for NLMs drastically underestimate uncertainty on out-of-distribution inputs, and that they therefore cannot be naively deployed in risk-sensitive applications. 
We identify the underlying reasons for this behavior and propose a novel training framework that captures useful predictive uncertainties for downstream tasks.
\end{abstract}

\begin{keywords}
  Uncertainty Quantification, Deep Bayesian Models, Approximate Inference
\end{keywords}

\section{Introduction}

In high-stakes, safety-critical applications of machine learning, reliable measurements of model predictive uncertainty matter just as much as predictive accuracy. Traditionally, applications requiring predictive uncertainty relied on Gaussian Processes (GPs) \citep{rasmussen_gp} for two reasons: (1) with an appropriately chosen kernel, they produce high predictive uncertainty on out-of-distribution inputs and low uncertainty on within-distribution inputs, and (2) their predictive uncertainty can be easily and meaningfully tuned via a small set of hyper-parameters of the kernel function (for example, the smoothness of GPs with an RBF kernel can be tuned via the length-scale and amplitude), allowing domain experts to encode task-relevant knowledge.
However, the computational complexity of GP inference has motivated research on fast and scalable alternatives -- specifically, deep Bayesian models with approximate inference (e.g.  \cite{springenberg_bayesian,Snoek}). 

Bayesian Neural Networks (BNNs) \citep{neal_bnn}, for example, provide a way of explicitly capturing model uncertainty -- uncertainty arising from having insufficient observations to determine the ``true'' predictor -- by placing a prior distribution over network weights. 
Just like GP inference (rather than point estimates), Bayesian inference for BNNs produces a distribution over possible predictions, whose variance can be used as an indicator of model uncertainty during test time. 
While promising, these alternatives unfortunately do not retain the two desired properties of GPs. That is, approximate inference methods yield models that are overly certain on test points coming from data-poor regions of the input space \citep{uncertainty_quality,uci_gap}, and require an unintuitive hyper-parameter tuning process in order to achieve task-appropriate behavior~\citep{sun2018functional}. 
Furthermore, unlike in the case of GP models, it is much more difficult to encode domain knowledge or functional knowledge (prior knowledge about the true predictor) in deep Bayesian models \citep{sun2018functional}, and hence the predictive uncertainties of these models are often difficult to tailor to specific downstream tasks. 

For this reason, the Neural Linear Model (NLM), a model with BNN-like properties but highly tractable inference, is gaining popularity ~\citep{Snoek, rl_nlm, activelearning_nlm, automl_nlm}.
An NLM places a prior only on the last layer of network weights and learns point estimates for the remaining weights; inference for the last weight-layer can then be performed analytically. 
One can interpret the deterministic network layers as a finite-dimensional feature-space embedding of the data, and the last layer of NLMs as performing Bayesian linear regression on the \emph{feature basis}, that is, the basis defined by the feature embedding of the data. 

Although NLMs are easy to implement and scale to large data-sets, in order to safely deploy them in risk-averse applications, we nonetheless need to verify that these models retain desirable properties of GPs. But despite their increasing popularity, little work has been done to methodically evaluate the quality of uncertainty estimates produced by NLMs. 
In the first paper to do so \citep{Rasmussen}, the authors show that NLMs can achieve high log-likelihood on test data sampled from training-data-scarce regions; they treat this as evidence that NLM uncertainties can distinguish data-scarce and data-rich regions. 
However, as noted by \cite{uncertainty_quality}, log-likelihood measures only how well predictive uncertainty aligns with the variation in the observed data and not how well these uncertainties predict data scarcity. In fact, in this work, we will show that the predictive uncertainties of NLMs resulting from traditional inference are overly confident on training-data-poor regions. 

In this paper, we describe a novel NLM training framework, UNcertainty-Aware (UNA) training, for producing predictive uncertainties that can distinguish data-rich from data-poor regions. UNA training retains the speed and scalability of traditional NLM inference while explicitly encouraging desirable GP-like properties in the learned model. Our contributions are both theoretical and methodological:

\textbf{1. We demonstrate that all three traditional training objectives for NLMs -- MLE, MAP, and maximum marginal likelihood -- all yield predictive uncertainty that cannot distinguish data-scarce from data-rich regions.} 
Furthermore, we identify the precise cause of the problem -- traditional NLM training procedures learn feature bases incapable of expressing uncertainty in data-scarce regions (also known as ``in-between" uncertainties ~\citep{uci_gap}).

\textbf{2. We propose a new framework, UNA training, for learning uncertainty-aware and task-aware feature bases for NLMs.} Our framework trains a set of auxiliary regressors on a shared feature basis. By specifying the properties needed of these auxiliary regressors for good downstream performance into a training objective, we explicitly encourage the learned feature basis to satisfy task-specific desiderata.
Our framework is both scalable and easy to implement. 

\textbf{3. We propose an instantiation of this framework, LUNA, for tasks requiring models that express in-between uncertainty.}
That is, LUNA is designed for any task that requires models to be more uncertain where there is little data relative to where there is data.
We do this by training an auxiliary set of linear predictors on top of the NLM's feature basis to extrapolate differently outside the data-rich regions (thus producing larger predictive variance in these regions). 

\textbf{4. We empirically demonstrate the utility of LUNA training on a wide range of downstream tasks.} (a) On a number of synthetic and real datasets, models trained with LUNA reliably identify data-scarce regions where baselines, including NLM with traditional training, struggle; (b) on transfer learning tasks, we are able to learn bases that outperform bases from traditional NLM training; (c) on Bayesian Optimization benchmarks, models learned with LUNA are comparable to baselines.

\section{Related Works} 
\label{sec:related_works}

\paragraph{Gaussian Processes.} While expressive and intuitive to tune, inference for Gaussian Processes (GPs) is computationally challenging for large datasets, scaling cubically with respect to the total number of observations. A large body of the literature about increasing the computational efficiency of GP inference focuses on approximate methods like inducing points (e.g. \cite{snelson2006sparse}), random feature expansions (e.g. \cite{wilson2016deep}) or stochastic variational optimization (e.g. \cite{cheng2017variational}). However, it is not known how well these methods approximate the performance of GP models with exact inference. On the other hand, the recent breakthrough in fast exact GP inference leverages multi-GPU parallelization and may be inappropriate when computational resources are limited \citep{wang2019exact}. In this work, we focus on training NLMs with modest-sized neural network architectures that nonetheless retain desired properties of GPs.
Lastly, because GPs are non-parametric, it is non-trivial for them to incorporate recent advances in deep learning architectures (like convolutional, recurrent, or graph structures) without significant computational overhead, requiring novel inference methods (e.g. ~\cite{ConvGP2017,Mattos2019,walker19a}).
In contrast, in this work we focus on NLMs, which can trivially be adapted to incorporate new innovations in neural network architectures. 

\paragraph{Bayesian Neural Networks.}
Early work on Bayesian Neural Networks (BNN) inference focuses on Hamiltonian Monte Carlo (HMC) \citep{neal2012bayesian} and Laplace approximations of the posterior \citep{mackay1992practical, buntine1991bayesian}. While HMC remains the ``gold standard'' for BNN inference, it does not scale well to large architectures or datasets; classical Laplace approximation, like Linearised Laplace \citep{uci_gap}, has similar difficulties scaling to modern architectures with large parameter sets. Although variational inference methods can be easily applied to BNN models with larger architectures, a number of these methods, like mean-field variational inference (VI) \citep{anderson1987mean, hinton1993keeping, blundell2015weight} and Monte Carlo Dropout (MCD) \citep{mcd} (which can be recast as a form of approximate variational inference with a spike and slab variational distribution), have recently been shown to underestimate predictive uncertainty, especially in data-scarce regions \citep{uci_gap, uncertainty_quality, mcd_pathologies}. 

\paragraph{Bayesian Models with Deterministic Neural Network Feature Extractors.} In order to bypass the difficulties of Bayesian inference for complex models, a number of works divide the models into two parts -- a neural network, trained deterministically, and a simple Bayesian model, for which inference can be performed exactly and/or scalably. These works can broadly be divided into two categories: ones for which the simple Bayesian model is a GP, and ones for which it is a Bayesian linear (or logistic) regression. 
Manifold Gaussian Processes (MGPs), for example, jointly train a GP on top of a neural network feature extractor~\citep{calandra2016manifold}. They are made scalable by \cite{liu2020simple}, who use a Random Fourier Feature (RFF) expansion GP approximation and isometry-enforcing regularization on the neural network feature map (SNGP). However, since high-quality inference requires a large number of features in the RFF expansion~\citep{random_bases}, inference for these models is still computationally expensive. In contrast to MGPs, the Neural Linear Model (NLM) \citep{Snoek} jointly trains a Bayesian linear regression (instead of a GP) on top of a deterministic neural network feature extractor.
Inference for NLMs therefore does not scale cubically with the number of observations (as in GP inference), and does not scale cubically with the number of features in the RFF expansion (as in SNGP); instead, it scales cubically with the number of features extracted by the deterministic neural network.
So long as the number of features extracted by the deterministic neural network remains relatively small NLMs are cheap to train.
While NLMs have been successfully applied in a number of applications requiring predictive uncertainty like Bayesian Optimization (BayesOpt) \citep{Snoek}, in this paper, we show that traditional inference for NLMs will, in most cases, underestimate uncertainty in data-scarce regions. Specifically, we show that the (relatively small number of) features extracted by the neural network hinder the Bayesian model from capturing in-between uncertainty. We then propose a novel training method that alleviates this issue.  

\paragraph{Ensemble Methods.} Alternatively, one can avoid Bayesian inference all together by ensembling (non-Bayesian) neural networks in order to estimate predictive uncertainty using the variance of predictions in the ensemble. For the variance of the predictions in the ensemble to be higher in data-sparse regions of the input space, the ensemble members must be diverse. Some works rely on bootstrapping to achieve this, or on multiple random restarts and adversarial training in ensemble building (e.g. \cite{lakshminarayanan2017simple}). Others, like the work of \cite{pearce2018bayesian}, relate ensembling to approximate Bayesian inference -- i.e. randomized MAP sampling \citep{lu2017ensemble, garipov2018loss} -- with the introduction of noise in the regularization term of each network (which in turn encourages for functional diversity). Though our focus in this paper is on Bayesian models, we nonetheless compare our methods to ensemble baselines.

\section{Background}
\label{sec:background}

Let the input space be $D$-dimensional, and suppose we have a dataset $\mathcal{D} = \{(\mathbf{x}_1, y_1), \ldots,  (\mathbf{x}_N, y_N)\}$ of $N$ observations, where $\mathbf{x}_n \in \mathbb{R}^D$ and $\mathbf{y}_n \in \mathbb{R}$. A Neural Linear Model (NLM) consists of: (1) a feature map $\phi_\theta:\mathbb{R}^D \to \mathbb{R}^L$, parameterized by a neural network with weights $\theta$, and (2) a Bayesian linear regression model fitted on the data embedded in the feature space:
\begin{equation*}
\mathbf{y} \sim \mathcal{N}(\mathbf{\Phi_\theta w}, \sigma^2\mathbf{I}), \quad 
\mathbf{w} \sim \mathcal{N}(\mathbf{0}, \alpha\mathbf{I})
\end{equation*}
where the design matrix 
$
\mathbf{\Phi_\theta} = [\widetilde{\phi_\theta(\mathbf{x}_1)}, \hdots, \widetilde{\phi_\theta(\mathbf{x}_N)]^\intercal}
$ is called the \emph{feature basis} and $\widetilde{\phi_\theta(\mathbf{x}_n)}$
is the feature vector $\phi_\theta(\mathbf{x}_n)$ augmented with a 1 (for a bias term). Thus, given the learned feature map, the NLM's posterior, marginal likelihood, and posterior predictive distributions can all be computed analytically. Intuitively, an NLM represents a neural network with a Gaussian prior over the last-layer weights $\mathbf{w}$, and with deterministic weights $\theta$ for the remaining layers. 

Inference for NLMs consists of two steps: 
\begin{itemize}
    \item[] \textbf{Step I:} Learn $\theta$ (with some objective, described below).
    \item[] \textbf{Step II:} Given $\theta$, infer $p(\mathbf{w} | \mathcal{D}, \theta)$ analytically.
\end{itemize}
In Step I, there are three accepted methods of learning $\theta$: maximum likelihood (MLE), maximum a posteriori (MAP) and Marginal-Likelihood, of which MAP is the most common. 
In MAP training~\citep{Snoek}, one maximizes the likelihood of the observed data with respect to $\theta$ and with respect to a point estimate, $\mathbf{\widetilde{w}}$, of the last layer's weights (i.e. we train the entire network deterministically, with an $\ell_2$-regularization term on the weights of the entire network):
\begin{align}\label{eq:nlm_obj}
\mathcal{L}_{\mathrm{MAP}}(\theta_\mathrm{Full}) = \log{ \mathcal{N}\left(\mathbf{y}; \mathbf{\Phi_\theta w}, \sigma^2\mathbf{I}\right)} - \gamma\norm{\theta_\mathrm{Full}}_2^2,
\end{align}
where $\theta_\mathrm{Full} = (\theta, \mathbf{\widetilde{w}})$ are weights of the full network. 
In Step II, we \emph{discard} $\mathbf{\widetilde{w}}$ and use the $\theta$ learned in Step I to infer $p(\mathbf{w} | \mathcal{D}, \theta)$ analytically.
MLE training is the same as MAP training, but with $\gamma = 0$. 
Lastly, in Marginal-Likelihood training, Step I consists of maximizing the likelihood of the observed data, after having marginalized out the weights $\mathbf{w}$:
\small
\begin{align}\label{eq:marginal_ll}
\begin{split}
\mathcal{L}_{\mathrm{Marginal}}(\theta) = \log{\mathbb{E}_{p(\mathbf{w})} \left\lbrack \mathcal{N}\left(\mathbf{y}; \mathbf{\Phi_\theta w}, \sigma^2\mathbf{I}\right) \right\rbrack} - \gamma\norm{\theta}_2^2.
\end{split}
\end{align}
\normalsize

In this paper, we show that all three inference methods for learning the feature basis (determined by $\theta$) in Step I produce models that are unable to distinguish between data-poor and data-rich regions (i.e. these models fail to capture in-between uncertainty).
In Section \ref{sec:method} we then propose a novel framework for training NLMs that learns models capable of expressing in-between uncertainty.

\section{Analysis of the Expressiveness of Neural Linear Model Uncertainties}
\label{sec:nlm_pathologies}

In this section, we show that conventional NLM training objectives result in models with predictive uncertainties that fail to distinguish data-rich from data-poor regions. Moreover, we identify the precise cause of the problem: none of the methods encourage diversity in the class of functions that are likely under the prior predictive; in fact, some training objectives explicitly discourage diversity. As a result, the posterior predictive of the learned model will be distributed over a limited function class. \emph{The limited functional variation across the input domain under the posterior predictive causes underestimation of in-between uncertainty}. 

\paragraph{Failure of MAP Training.}
When the regularization term in the MAP objective (Equation \ref{eq:nlm_obj}) is non-zero, the feature map $\phi_\theta$ is explicitly discouraged from producing bases spanning functions that extrapolate differently away from the observed data.
This is because such diversity comes at the cost of larger values in $\theta$ and does not impact the log-likelihood of the observed data. 

In Figure \ref{fig:nlm_priors}, we show samples from the prior predictive for two NLMs -- with a regularized and unregularized feature map,  $\phi_\theta$, respectively. With the regularization ($\gamma = 10.0$), the feature basis spans a limited set of functions -- the prior predictive samples show \emph{no variation} in the data-scarce region.
So what kind of posterior predictives does this prior predictive induce? Figure \ref{fig:nlm_posteriors} shows that when the prior predictive is inexpressive, the posterior predictive shows little in-between uncertainty. 
In Figure \ref{fig:rr_reg} of Appendix \ref{sec:appendix_toy_results}, we reproduce the effect of regularization on NLM prior predictives for different values of $\gamma$ and over random restarts, and show that (a) regularization consistently leads to inexpressive prior predictives, and (b) that as a result, the posterior predictives are nearly as certain in data-poor regions as they are in data-rich regions. 

So what happens when we train the NLM without regularization ($\gamma = 0.0$)? In this case, the NLM is not explicitly discouraged from expressing diverse functions. As such, for \emph{for the particular random restart in Figure \ref{fig:nlm_map}}, the basis for $\phi_\theta$ spans a diverse class of functions under the prior distribution;
that is, linear combinations of the features $\phi_\theta$ under $p(\mathbf{w})$ show variation in both the data-rich and data-scarce regions. 
Correspondingly, Figure \ref{fig:nlm_posteriors} shows that the posterior predictive is expressive. 
Based solely on these results, one might suppose that traditional training for NLMs without regularization (i.e. MLE training) does not suffer from the aforementioned issues. 
However, as we discuss next, MLE training cannot consistently learn models that capture in-between uncertainty. 

\begin{figure}[p]
    \centering
    \begin{subfigure}{1.0\linewidth}
    \centering
    \includegraphics[width=0.85\linewidth]{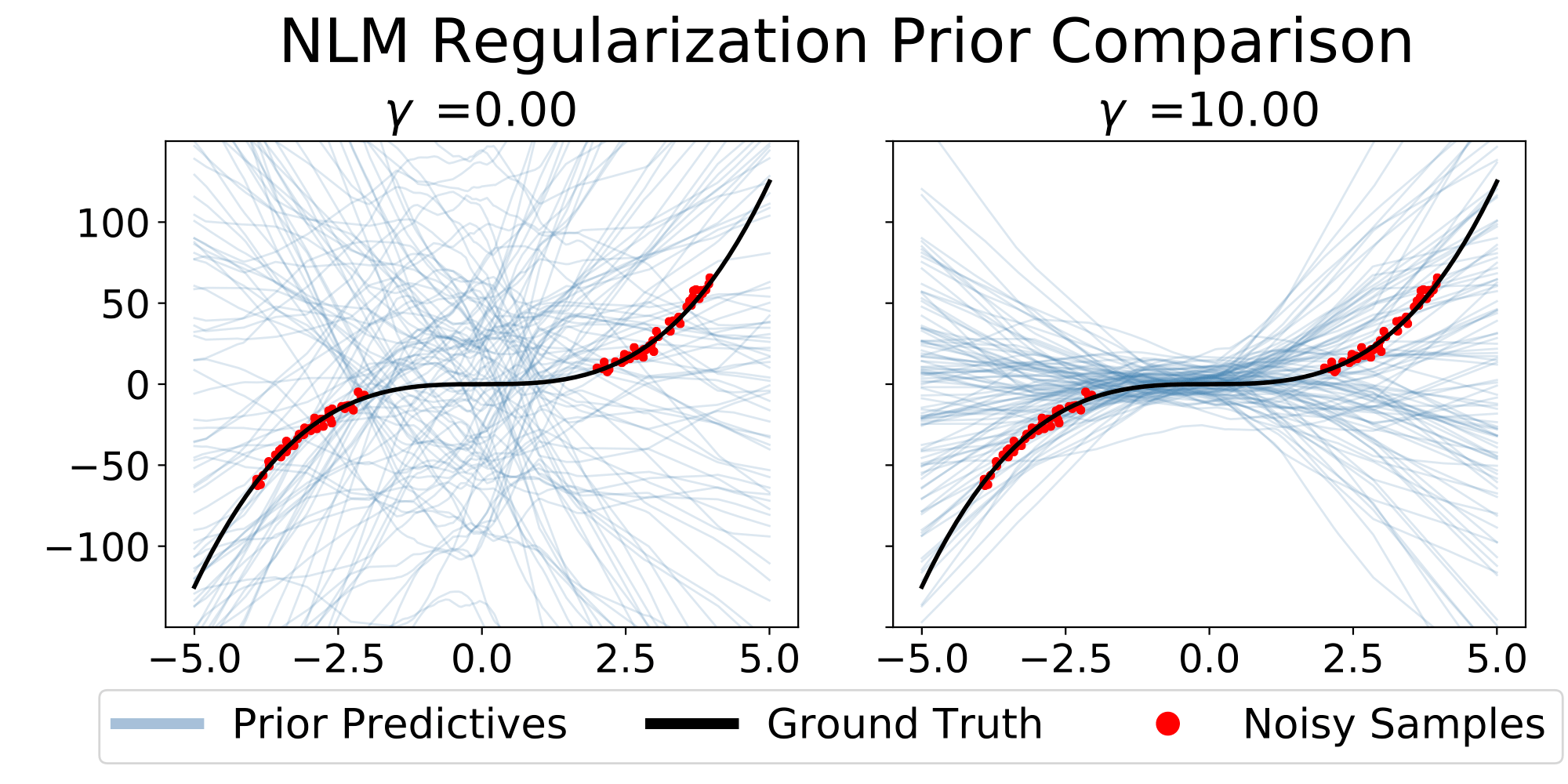}
    \caption{Samples from the prior predictive.}
    \label{fig:nlm_priors}
    \end{subfigure}
    ~
    \begin{subfigure}{1.0\linewidth}
    \vspace{0.5cm}
    \centering
    \includegraphics[width=0.85\linewidth]{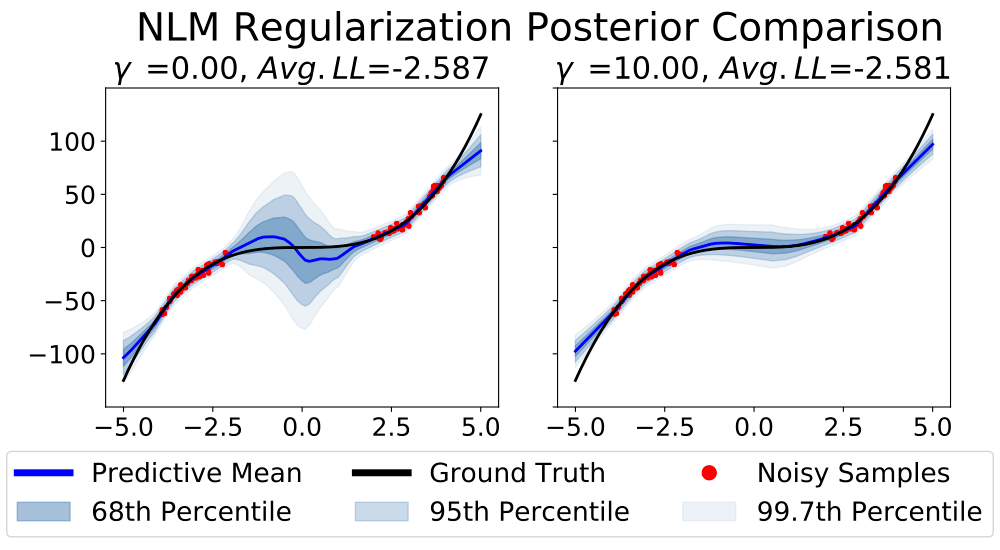}
    \caption{Posterior predictive.}
    \label{fig:nlm_posteriors}
    \end{subfigure}
    
    \caption{\textbf{MAP training discourages functional diversity under the NLM prior predictive, preventing it from capturing in-between uncertainty.} Top-row: samples from the prior predictive. Bottom-row: posterior predictive. Left-column: regularization of $\gamma = 0.0$. Right-column: $\gamma = 10.0$. More regularization therefore discourages functional diversity and results in an NLM unable to capture in-between uncertainty. We note that trivially setting $\gamma = 0.0$ (i.e. using MLE training) will not fix the issue. Figure \ref{fig:rr_reg} shows that MLE training does not consistently result in functional diversity under the prior predictive. Lastly, this figure illustrates a general pathology of log-likelihood based training objectives, since log-likelihood cannot be used to evaluate uncertainty~\cite{uncertainty_quality}; the log-likelihood for both posterior predictives is nearly the same, even though only one of the two expresses in-between uncertainty.}
    \label{fig:nlm_map}
\end{figure}

\paragraph{Failure of MLE Training.}
While MAP training explicitly discourages functional diversity under the prior predictive, MLE training does not; however, it also does nothing to encourage it, thereby leaving diversity up to chance.
As a result, across random restarts, MLE training ($\gamma = 0$) rarely learns models able to distinguish between data-poor and data-rich regions (see Figure \ref{fig:rr_reg} of Appendix \ref{sec:appendix_toy_results}).
As we discuss below, the failure of MLE training to encourage functional diversity actually affects all three training objectives -- MLE, MAP, and marginal likelihood training -- as well as hyper-parameter selection.  

\paragraph{Failure of Marginal Likelihood Training.}
When $\gamma > 0$, just like MAP training, Marginal Likelihood training (Equation \ref{eq:marginal_ll}) discourages learning models with expressive prior predictive.
Specifically, the feature bases learned by optimizing $\mathcal{L}_{\mathrm{Marginal}}$ do not span diverse functions across random restarts; hence the corresponding posterior predictive distributions are inexpressive and are unable to capture in-between uncertainty (Appendix \ref{sec:appendix_toy_results} Figure \ref{fig:marg_highgamma}).
When $\gamma$ is very close to $0$, just like MLE training, the learned feature bases are rarely expressive across random restart (see Appendix \ref{sec:appendix_toy_results} Figure \ref{fig:marg_lowgamma}).

Most interestingly, however, when $\gamma$ is exactly $0$, we show that Marginal Likelihood training suffers from a new failure: the feature map blows up for ReLU networks. 
Intuitively, increasing the magnitude of $\mathbf{\Phi_\theta}$ by a scalar multiple allows us to decrease the weights of the last layer by the same multiple with no loss to the likelihood, and thus we can trivially increase $\mathcal{L}_{\mathrm{Marginal}}$ by scaling the feature basis $\mathbf{\Phi_\theta}$. We formalize this intuition in the proposition below.
\begin{restatable}{proposition}{ThmMarginalBlowup} \label{thm:marg_blowup}~ 
Suppose ReLU activations and that $\Phi_\theta^\intercal \Phi_\theta$ is invertible. For fixed $\theta$, $\mathbf{w}$ and any $c >0$, we define $\theta^c$ as $\theta$ but with the last layer of weights scaled by $c$, we also define $\mathbf{w}^c = \frac{1}{c} \mathbf{w}$. For a sufficiently large $C>0$ and any $c> C$, we have that $\mathcal{L}_{\mathrm{Marginal}}(\theta_\mathrm{Full}) < \mathcal{L}_{\mathrm{Marginal}}(\theta^c_\mathrm{Full})$, where $\theta_\mathrm{Full} = (\theta, \mathbf{w})$ and $\theta^c_\mathrm{Full} = \left(\theta^c, \mathbf{w}^c\right)$.
\end{restatable}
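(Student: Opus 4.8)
The engine of the argument is the positive homogeneity of the ReLU nonlinearity, $\mathrm{ReLU}(cz)=c\,\mathrm{ReLU}(z)$ for every scalar $c>0$. The first step is to record what scaling the last layer of $\phi_\theta$ by $c$ does: every coordinate of the feature map is multiplied by exactly $c$, so $\phi_{\theta^c}(\mathbf{x})=c\,\phi_\theta(\mathbf{x})$ pointwise, and hence the design matrix satisfies $\mathbf{\Phi}_{\theta^c}=\mathbf{\Phi}_\theta\,\mathrm{diag}(c,\dots,c,1)$ — the $L$ feature columns are rescaled by $c$, while the appended constant (bias) column is untouched. The immediate payoff is that $\mathbf{\Phi}_{\theta^c}\mathbf{w}^c$ agrees with $\mathbf{\Phi}_\theta\mathbf{w}$ except on the bias contribution, which is shrunk by $1/c$; equivalently, the mean of the Gaussian data term at $(\theta^c,\mathbf{w}^c)$ differs from the one at $(\theta,\mathbf{w})$ by a vector of norm $O(1/c)$. (I read $\mathcal{L}_{\mathrm{Marginal}}(\theta_{\mathrm{Full}})$, given its $(\theta,\mathbf{w})$ signature and the accompanying intuition, as the objective value that still carries the Gaussian prior $p(\mathbf{w})$ on the last layer; the assignment $\mathbf{w}^c=\tfrac1c\mathbf{w}$ is precisely the rescaling that leaves the data fit intact.)

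Next I would substitute $\theta^c_{\mathrm{Full}}=(\theta^c,\mathbf{w}^c)$ into the objective and isolate the dependence on $c$. With $\gamma=0$ the $\ell_2$ penalty on $\theta$ disappears, so only the Gaussian data term and the Gaussian last-layer prior survive. By the previous paragraph the data term is, up to an $O(1/c)$ perturbation of its mean, unchanged — this is the ``no loss to the likelihood'' observation that scaling the basis up and the weights down leaves the fit essentially fixed. The prior term, by contrast, is $\log\mathcal{N}(\mathbf{w}^c;\mathbf{0},\alpha\mathbf{I})=-\tfrac{L}{2}\log(2\pi\alpha)-\tfrac{1}{2\alpha c^{2}}\norm{\mathbf{w}}^{2}$, which for $\mathbf{w}\neq\mathbf{0}$ is strictly increasing in $c$ and climbs toward the cap $-\tfrac L2\log(2\pi\alpha)$; in particular, relative to $c=1$ it gains a strictly positive amount that is bounded below for all large $c$. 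So the objective splits as ``a bounded perturbation of order $1/c$'' plus ``a term that is eventually larger by a fixed positive margin.''

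The proof then closes by a magnitude comparison: pick $C>0$ large enough that, for all $c>C$, the vanishing change in the data term is dominated by the uniformly positive gain in the prior term; for such $c$ we obtain $\mathcal{L}_{\mathrm{Marginal}}(\theta_{\mathrm{Full}})<\mathcal{L}_{\mathrm{Marginal}}(\theta^c_{\mathrm{Full}})$, and since this holds for arbitrarily large $c$ the objective has no finite maximizer in this direction — the feature map blows up. I expect the main obstacle to be exactly this trade-off bookkeeping around the bias column: without the augmentation column the data term would be \emph{exactly} invariant and the inequality would hold for every $c>1$ with no ``sufficiently large'' caveat, but with it one needs a quantitative (Lipschitz-type) bound controlling how much the log-Gaussian data term can move when its mean is perturbed by $O(1/c)$ — this is where the hypothesis that $\mathbf{\Phi}_\theta^\intercal\mathbf{\Phi}_\theta$ is invertible enters, keeping the relevant projections and residuals finite — and then absorb that bound into the choice of $C$. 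A minor point worth stating explicitly is that ReLU's non-differentiability at $0$ plays no role, since the argument uses only the algebraic identity $\mathrm{ReLU}(cz)=c\,\mathrm{ReLU}(z)$ and never a gradient.
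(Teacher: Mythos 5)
Your intuition matches the paper's informal motivation, but your formalization targets the wrong functional, and that is a genuine gap. You read $\mathcal{L}_{\mathrm{Marginal}}(\theta_{\mathrm{Full}})$ as ``data log-likelihood at the point $\mathbf{w}$ plus the log prior density evaluated at $\mathbf{w}$,'' so that the gain from scaling comes from $\log\mathcal{N}(\mathbf{w}^c;\mathbf{0},\alpha\mathbf{I}) = -\tfrac{L}{2}\log(2\pi\alpha)-\tfrac{1}{2\alpha c^2}\norm{\mathbf{w}}^2$. But $\mathcal{L}_{\mathrm{Marginal}}$ is the \emph{marginal} likelihood, $\log\mathbb{E}_{p(\mathbf{w})}[\mathcal{N}(\mathbf{y};\mathbf{\Phi_\theta w},\sigma^2\mathbf{I})]$: the last-layer weights are integrated out, so the objective contains no prior-density-at-a-point term, and the quantity $\tfrac{1}{2\alpha c^2}\norm{\mathbf{w}}^2$ on which your whole comparison rests simply does not appear in it. The paper's proof instead uses the evidence decomposition, in which the scaling of $\mathbf{\Phi_\theta}$ enters through the \emph{posterior} quantities: the penalty is $-\tfrac{1}{2\alpha}\norm{\mathbf{w}_N}^2$ with $\mathbf{w}_N = \tfrac{1}{\sigma^2}\mathbf{V}_N\mathbf{\Phi_\theta}^\intercal\mathbf{y}$. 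The argument is that for large feature magnitude $\mathbf{V}_N \to \sigma^2(\mathbf{\Phi_\theta}^\intercal\mathbf{\Phi_\theta})^{-1}$ and $\mathbf{w}_N$ approaches the ordinary-least-squares solution, so $\norm{\mathbf{w}_N}$ shrinks like $1/c$ under the rescaling while the data-fit term is unchanged (the scaling of $\mathbf{\Phi}_{\theta^c}$ being cancelled by $\mathbf{w}^c=\tfrac1c\mathbf{w}$); the monotone decrease of $\norm{\mathbf{w}_N}$ is what drives $\mathcal{L}_{\mathrm{Marginal}}$ up. This is also where the hypothesis that $\mathbf{\Phi_\theta}^\intercal\mathbf{\Phi_\theta}$ is invertible is used — it makes the OLS limit well defined — not, as you conjectured, to get a Lipschitz bound on the data term; that misattribution is a symptom of the objective having been misidentified. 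Likewise, the ``sufficiently large $C$'' in the statement reflects the asymptotic regime $\tfrac{1}{\sigma^2}\mathbf{\Phi_\theta}^\intercal\mathbf{\Phi_\theta}\gg\tfrac1\alpha\mathbf{I}$ needed for this posterior-mean argument, not the bias-column bookkeeping you emphasize (your observation that the appended constant column is not rescaled is a fair catch that the paper glosses over, but it is not the crux).

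To repair your proof you would need to restate it for the integrated objective: write out the evidence in its closed form, show the data-residual piece is invariant under $(\mathbf{\Phi_\theta},\mathbf{w})\mapsto(c\,\mathbf{\Phi_\theta},\mathbf{w}/c)$, and then track how the terms involving $\mathbf{w}_N$ and $\mathbf{V}_N$ move as $c$ grows — the step your current argument never engages with. The ReLU positive-homogeneity observation and the overall ``scale features up, pay less for small last-layer weights'' narrative are the right starting point, but as written the inequality you establish concerns a MAP-style objective with a Gaussian prior on the last layer, not $\mathcal{L}_{\mathrm{Marginal}}$ as defined in Equation (2) of the paper.
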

The proof can be found in Appendix \ref{sec:marginal_ll}.
Proposition \ref{thm:marg_blowup} tells us that that we can continue to increase $\mathcal{L}_{\mathrm{Marginal}}$ by scaling any solution $(\theta, \mathbf{w})$ by larger and larger $c$'s -- that is, marginal log-likelihood training of $\theta$ is incentivized to trivially increase the magnitude of the feature basis $\mathbf{\Phi_\theta}$ rather than meaningfully change $\theta$. This blow-up of the learned features $\mathbf{\Phi_\theta}$ necessitates adding a regularization term $\gamma\norm{\theta}_2^2$ to the marginal log-likelihood objective, which then limits the expressiveness of the feature bases (and hence the expressiveness of the posterior predictive distribution).

\paragraph{General Failure of Log-Likelihood Based Training.}
As observed by ~\cite{uncertainty_quality}, log-likelihood does not measure the quality of a model's uncertainty in data-poor regions.
As such, learning a basis by maximizing the likelihood of the observed data (whether via the MLE, MAP, or marginal likelihood objectives) does not encourage learning a basis spanning a diverse class of functions, and thus does not ensure an expressive prior (and hence posterior) predictive. 
For example, in Figure \ref{fig:nlm_posteriors}, the test log-likelihood of both models are nearly equivalent, yet one is uncertain in the gap where the other is nearly as confident as on the observed data.

The fact that log-likelihood cannot be used to evaluate the uncertainty of the model in data-poor region additionally presents problems for \emph{all three} training objectives when selecting hyper-parameters: e.g. $\gamma$ (the regularization strength) and $L$ (the number of features in the feature map $\phi_\theta$), or the architecture of $\phi_\theta$ (e.g. the depth and width of the neural network).
For example, $\gamma$ is typically chosen via grid-search or BayesOpt to maximize the log-likelihood on the validation set (sampled from the same distribution as the training data). However, in Figure \ref{fig:nlm_posteriors}, the test log-likelihood of the NLM with the regularized feature map happens to be a hair higher. Thus, by maximizing validation log-likelihood, we may choose a model with a prior predictive that is inexpressive over the data-scarce region and hence unable to capture in-between uncertainty in the posterior predictive.

Similarly, we find that if we use validation log-likelihood to select the architecture of $\phi_\theta$, we are likely to choose models that are unnecessarily large.
In Figures \ref{fig:rr_features} and \ref{fig:rr_depth} of Appendix \ref{sec:appendix_toy_results}, we examine the effect of the depth of the network, as well as the number of features $L$, on the expressiveness of the prior predictives (and hence the posterior predictives) of NLMs. In particular, we show that for shallow and narrow models, even when $\phi_\theta$ is unregularized, random restarts consistently result in models with inexpressive prior predictives. On the other hand, for models with more capacity, some random restarts do yield expressive priors predictives. Thus, traditional NLM training and log-likelihood based hyper-parameter selection hinder us from (1) training modest-sized models (i.e. with a small $L$) that can express in-between uncertainty, and (2) from selecting amongst models with a larger capacity that, by chance, express in-between uncertainty. 

This general failure of log-likelihood based objectives motivates  our proposed training framework (discussed next), which avoids the failure modes of the three traditional NLM training objectives by explicitly encouraging for functional diversity under the prior predictive. This allows us to learn modest-sized models with expressive posterior predictive distributions that can distinguish between data-poor and data-rich regions.

\section{Training Framework: Uncertainty-Aware Bases via Auxiliary Networks} \label{sec:method}

\begin{figure}
  \centering
  \includegraphics[width=0.85\linewidth]{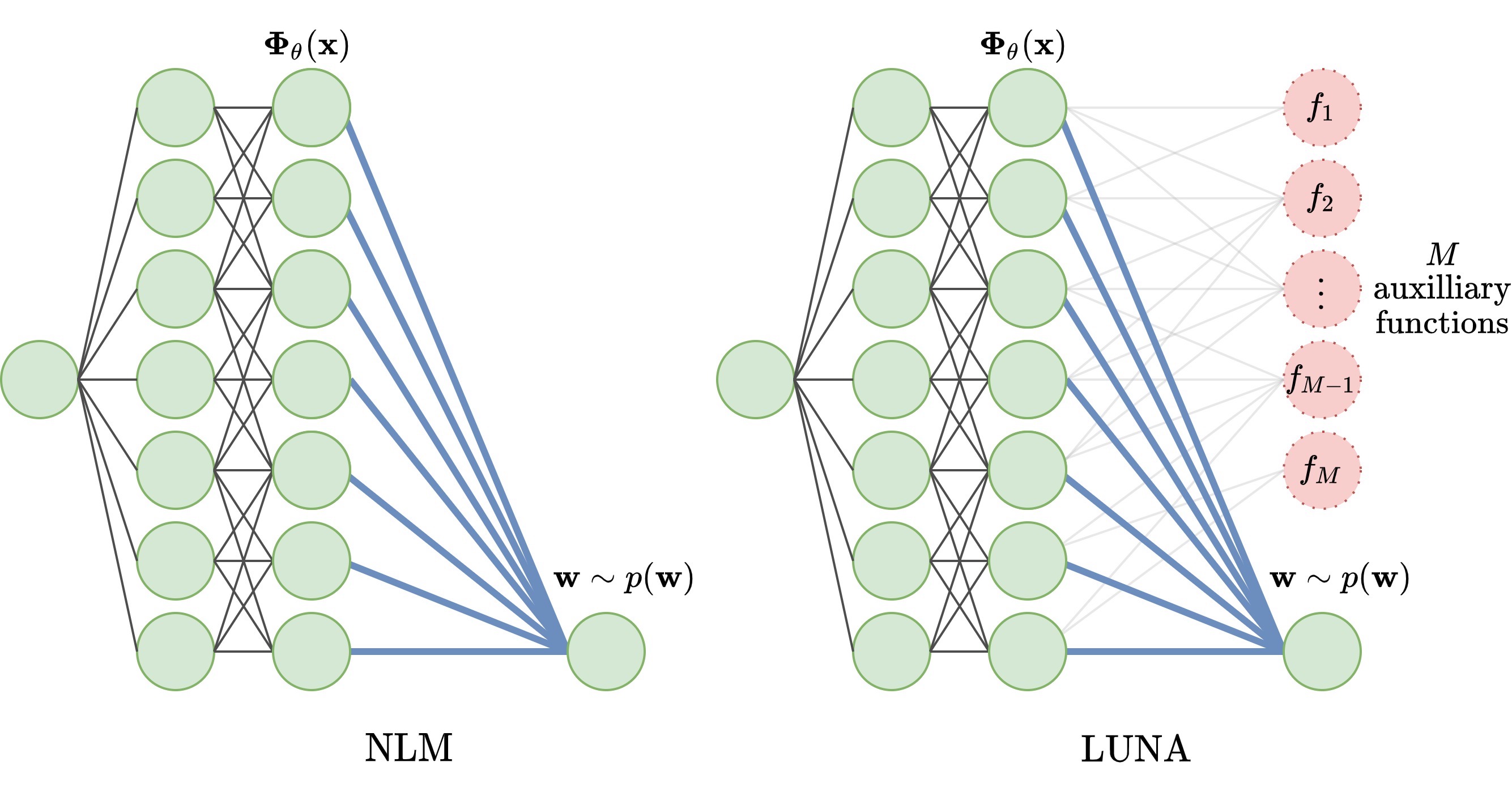}
  \caption{\textbf{Illustration of traditional NLM training (left) and UNA training (right).} Black arrows are deterministic weights, blue arrows are probabilistic weights (i.e. weights for which we assume a prior), and red nodes are auxiliary regressors. In Step I of UNA, the auxiliary regressors are trained to express a diverse set of functions. In Step II, they are discarded and Bayesian linear regression is performed on the learned basis $\mathbf{\Phi}_\theta$.}
  \label{fig:lb_nlm}
\end{figure}

We propose a general training framework, UNcertainty Aware Training (UNA), to learn NLMs that satisfy task-specific desiderata (e.g. to learn an NLM that captures in-between uncertainty). 
Since it is unintuitive to directly train the feature-basis of an NLM to satisfy task-specific desiderata (i.e. to specify desiderata in ``feature-space''), we instead choose to specify our desiderata in ``function-space'' (i.e. over functions that this basis spans). 
We do this by training a set of auxiliary regressors on a shared feature basis to satisfy the properties needed for good downstream performance. 
Like traditional NLM training, UNA training consists of two steps:


\textbf{Step I: Feature Training with Diverse and Task-Appropriate Auxiliary Regressors.} 
We train $M > 1$ auxiliary linear regressors, $f_m(\mathbf{x}) = \mathbf{\Phi_\theta}\mathbf{\widetilde{w}}_m$, on a shared feature basis $\mathbf{\Phi_\theta}$ (see illustration in Figure \ref{fig:lb_nlm}).
In this way, we can indirectly specify our desiderata on the feature-basis via an appropriately chosen objective function applied to the regressors.
By training the auxiliary regressors $f_m(x)$ to be diverse, for example, the shared feature basis will support a prior predictive distribution over a diverse class of functions. Furthermore, one can impose constraints on the auxiliary regressors expressing domain or functional knowledge. 

\textbf{Step II: Bayesian Linear Regression on Features.} After optimizing the feature map, \emph{we discard the $M$ auxiliary regressors} $\mathbf{\widetilde{w}}_m$ and perform Bayesian linear regression on the expressive feature basis $\mathbf{\Phi_\theta}$ learned in Step I. That is, we infer the posterior $p(\mathbf{w} | \mathcal{D}, \theta)$.
While Step I of UNA is novel, Step II directly follows from traditional NLM training.

Since this framework is quite general, we now propose a concrete instantiation, LUNA, designed to train NLMs that capture in-between uncertainties.
Our instantiation is both scalable and easy to implement.
In Section \ref{sec:experiments}, we then show that our instantiation produces expressive posterior predictive distributions that capture in-between uncertainties.

\subsection{Learned Uncertainty-Aware (LUNA) Bases} \label{sec:luna}
Using the insights from Section \ref{sec:nlm_pathologies}, we propose an instantiation of UNA.
In this instantiation, we explicitly encourage the feature basis to span diverse functions under the prior $p(\mathbf{w})$ (i.e. we do not leave diversity up to chance, as with MLE training). 
To do this, we design a training objective $\mathcal{L}_\text{LUNA}$ for Step I that maximizes the average log-likelihood of the auxiliary regressors on the training data, measured by $\mathcal{L}_\text{FIT}$, while encouraging for functional diversity amongst them, measured by $\mathcal{L}_\text{DIVERSE}$ (defined further below):
\begin{equation}
\mathcal{L}_\text{LUNA}(\Psi) = \mathcal{L}_\text{FIT}(\Psi) - \lambda \cdot \mathcal{L}_\text{DIVERSE}(\Psi),
\end{equation}
where $\Psi = (\theta, \mathbf{\tilde{w}}_1, \hdots \mathbf{\tilde{w}}_M)$, $\theta$ parameterizes the shared design matrix and $\mathbf{\tilde{w}}_m$ are the weights of the auxiliary regressor $f_m =  \mathbf{\Phi_\theta}\mathbf{\tilde{w}}_m$. The constant $\lambda$ controls for the degree to which we prioritize diversity. 
After optimizing our feature map via: 
$$
\theta_\text{LUNA}, \{\mathbf{\tilde{w}}_{m}^*\}  = \mathrm{argmax}_\Psi\; \mathcal{L}_\text{LUNA}(\Psi),
$$
we discard the auxiliary regressors $\{\mathbf{\tilde{w}}_{m}^*\}$ and perform Bayesian linear regression on the diversified feature basis, the LUNA basis. That is, we analytically infer the posterior $p(\mathbf{w} | \mathcal{D}, \theta_{\text{LUNA}})$ over the Bayesian last layer of weights $\mathbf{w}$ in the NLM (as in Step II of traditional NLM training).  
\emph{In summary, LUNA training results in a basis that supports a diverse set of predictions by varying $\mathbf{w}$.}

\paragraph{$\mathcal{L}_\text{FIT}$: Fitting the Auxiliary Regressors.}
We learn the regressors jointly with $\mathbf{\Phi_\theta}$, by maximizing the average training log-likelihood of the regressors on the training data, with $\ell_2$ penalty on $\theta$ as well as on the weights of each regressor:
\begin{equation*}
\mathcal{L}_\text{FIT}(\Psi) = \frac{1}{M}\sum_{m=1}^M \log \mathcal{N}(\mathbf{y}; f_m(\mathbf{x}), \sigma^2 \mathbf{I})- \gamma\norm{\Psi}_2^2.
\end{equation*}

\paragraph{$\mathcal{L}_\text{DIVERSE}$: Enforcing  diversity.}
We enforce diversity in the auxiliary regressors as a proxy for the diversity of the functions spanned by the feature basis. 
We adapt the Local Independence Training (LIT) objective proposed by \cite{diversity_enforcement} to encourage our regressors to extrapolate differently away from the training data,
\begin{equation*}
\mathcal{L}_\text{DIVERSE}(\Psi) = 
\sum_{i=1}^{M-1}\sum_{j=i+1}^M
\text{CosSim}^2\left(\nabla_{\mathbf{x}}f_i(\mathbf{x}), \nabla_{\mathbf{x}}f_j(\mathbf{x}) \right),
\label{eq:luna_obj}
\end{equation*}
where $\text{CosSim}$ is cosine similarity.
Intuitively, $\mathcal{L}_\text{DIVERSE}(\Psi)$ iterates over pairs of auxiliary regressors $f_i, f_j$, encouraging the angle between their gradients with respect to the inputs to be large. 
In doing so, this penalty encourages that different regressors extrapolate differently away from the data. 
We avoid expensive gradient computations using a finite difference approximation -- see Appendix \ref{luna-training-objective} for a detailed explanation of $\mathcal{L}_\text{DIVERSE}(\Psi)$.

\paragraph{Model Selection.} Since in Section \ref{sec:nlm_pathologies}, we show that log-likelihood cannot be used to distinguish between models that do and do not capture in-between uncertainty, we cannot use log-likelihood (alone) for hyper-parameter selection. 
As such, we incorporate the diversity penalty into the model selection process as follows.
After training, we scale the diversity penalty by $1/{M \choose 2}$, which is the number of combinations of auxiliary regressors, so that the diversity penalty is comparable across choices of $M$.
Then, across all hyper-parameter choices and random restarts, we keep only models that score in the top $10\%$ on validation log-likelihood, out of which we select the hyper-parameters of models that exhibit the most diversity (i.e. the lowest $\mathcal{L}_\text{DIVERSE}$).
In this way, we select models that both have good fit on the observed data, and capture in-between uncertainty. 

\paragraph{Demonstration on 1-D synthetic data.}
LUNA's auxiliary regressors and resultant posterior predictive are visualized in Figure \ref{fig:luna_aux}, on the ``Cubic Gap Example'' (described in Appendix \ref{sec:synthetic-data}). The figure shows that the regressors both fit the data and extrapolate differently, and therefore the resultant model expresses in-between uncertainty. 
In contrast to MLE training, LUNA \emph{consistently} captures in-between uncertainty across a varying number of auxiliary regressors $L$ and across random restarts (see Appendix \ref{sec:appendix_toy_results} Figure \ref{fig:aux_rr}). 
That is, unlike traditional NLM training, which struggles to capture in-between uncertainty (especially for smaller architectures), LUNA training better utilizes the available capacity of the NLM to fit the data and express uncertainty. 

\begin{figure}[t]
    
    \centering
    \includegraphics[width=0.8\linewidth]{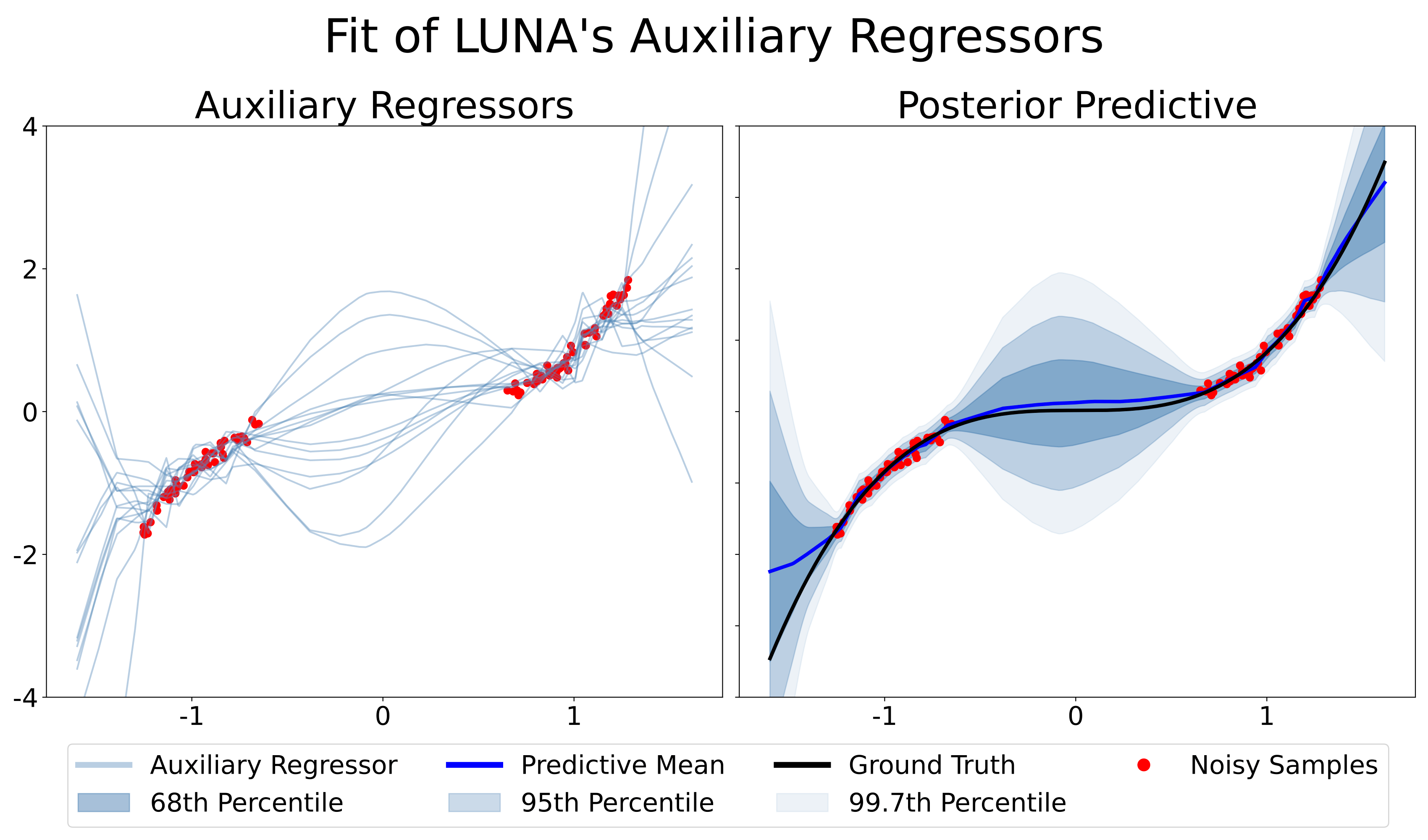}
    
    \caption{\textbf{LUNA training yields bases that explain the data and extrapolate differently, thereby capturing in-between uncertainty.}}
    \label{fig:luna_aux}
\end{figure}

\section{Experiments}\label{sec:experiments}

We compare LUNA to traditional NLM training as well as to a variety of baselines on 3 synthetic and on 6 UCI ``gap'' datasets \citep{uci_gap}. We show that LUNA bases can distinguish data-scarce regions from data-rich regions where baseline methods struggle. 
We then show that as a result, LUNA offers competitive performance relative to baselines on a variety of downstream tasks. On the transfer learning task, we show that with fewer features, LUNA bases achieve lower generalization error and retain their utility under covariate shift. On Bayesian Optimization tasks, we show that LUNA bases are comparable to baselines. 

\paragraph{Baselines.} In our experiments, we consider several baselines. 
First, we compare against NLMs with traditional inference (NLM).
Second, when it is not too computationally taxing, we compare LUNA bases against Gaussian Processes (GP) and BNNs with HMC inference (HMC). We regard these as our ``gold standard'', since, with appropriately chosen hyper-parameters, both explain the data well and capture in-between uncertainty. When using small architectures, we also compare against BNNs with Linearised Laplace (Lin Lap) inference \citep{uci_gap} (which scales poorly to large architectures). 
Third, we compare against a recently proposed method that, like an NLM, consists of a Bayesian regressor trained on-top of a deterministic feature extractor: Spectral-normalized Neural Gaussian Processes using a GP final layer (SNGP) as well as using a Random Fourier Features regression final layer (RFF SNGP) \citep{liu2020simple}.
Lastly, we compare against other existing models and inference for capturing uncertainty: MC Dropout (MCD) \citep{mcd}, BNN with mean-field variational inference (BBVI), Vanilla Ensembles (ENS VAN), Anchored Ensembles (ENS ANC) \citep{pearce2020uncertainty}, and Bootstrap Ensembles (ENS BOOT). 
Experimental setup detailed in Appendix \ref{sec:exp-setup}.

\paragraph{Evaluation Metrics.} In addition to using downstream tasks to compare the quality of LUNA's predictions and predictive uncertainty against the aforementioned baselines, we also use the following metrics: Average Log-Likelihood (LL) and Root Mean Square Error (RMSE) to assess model fit, and gap to not-gap Epistemic Uncertainty Relative Change (EURC), to assess whether a model's predictive uncertainty is higher where there is no data (relative to where there is data). All metrics are described in Appendix \ref{sec:eval-metrics}. 

\subsection{LUNA bases captures in-between uncertainty on synthetic and real data}

%
%
\begin{figure}
    \centering
    \begin{subfigure}{1.0\textwidth}
        \includegraphics[width=1.0\linewidth]{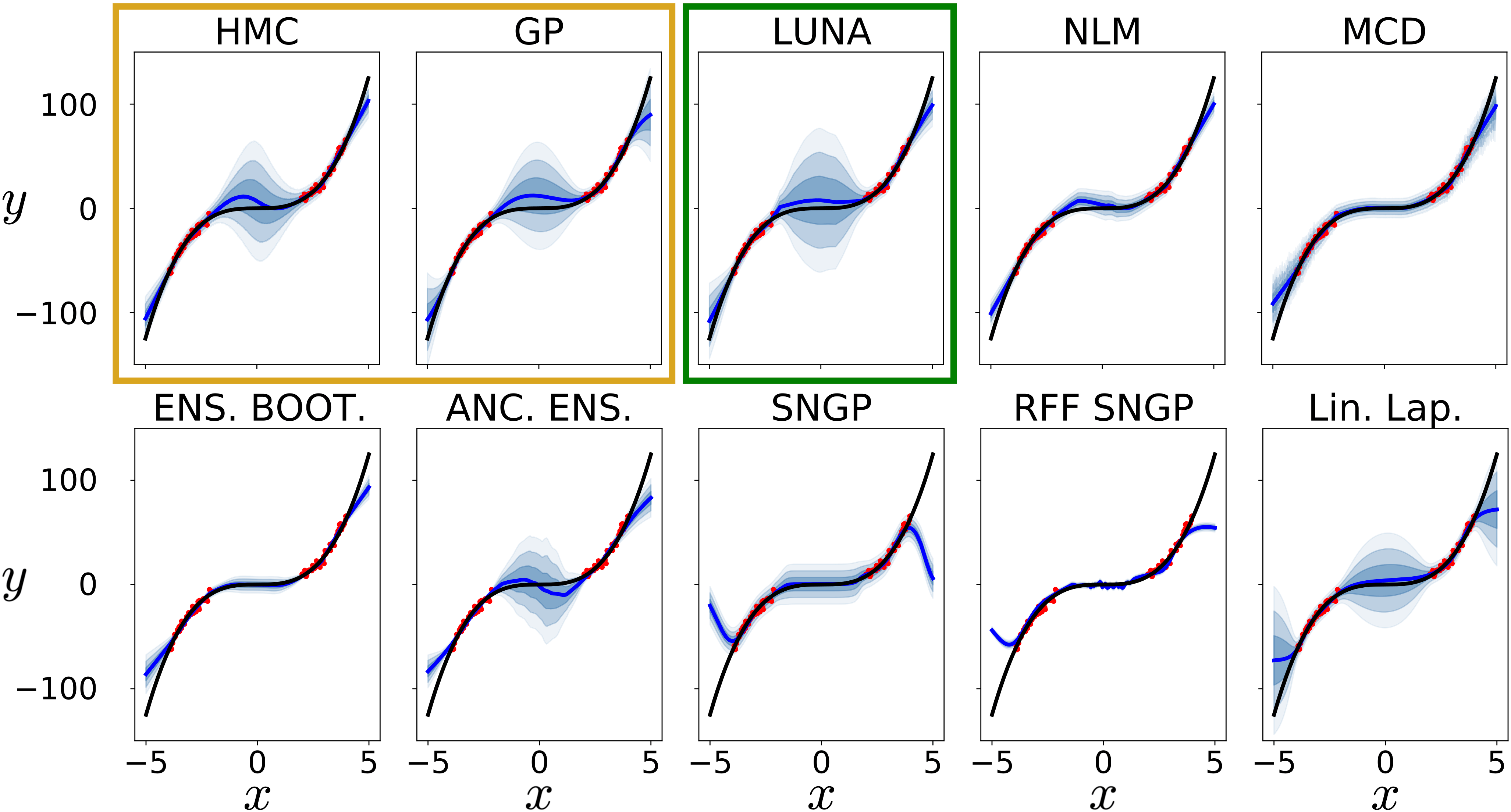}
    \end{subfigure}
    
    \vspace{0.25cm}
    
    \begin{subfigure}{1.0\textwidth}
        \raggedleft 
        \includegraphics[width=0.913\linewidth]{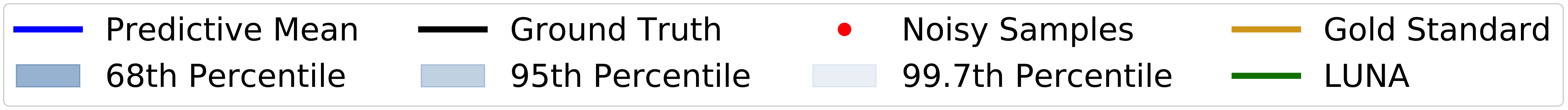}
    \end{subfigure}
    
    \caption{\textbf{LUNA's in-between uncertainty resembles that of ``gold standard'' while most other baselines' do not.} On the ``Cubic Gap'' data (described in Appendix \ref{sec:synthetic-data}), LUNA, along with Anchored Ensembles and Linearised Laplace, most resembles the gold standard baselines -- GP, and BNN with HMC -- while the remaining baselines exhibit little uncertainty in the gap.}
    \label{fig:toy_dataset_results}
\end{figure}

\paragraph{LUNA training captures in-between uncertainty on toy data.} We construct a synthetic 1-D ``Cubic Gap'' regression data set, sampled from a cubic function with a gap in the training input (detailed in Appendix \ref{sec:synthetic-data}).
Figure \ref{fig:toy_dataset_results} shows that LUNA bases, along with Anchored Ensembles and Linearised Laplace, matches the performance of the ``gold-standards'' -- BNN with HMC and GP -- while other baselines either drastically underestimate in-between uncertainty or express uncertainties that do not scale with respect to the distance to distance to the training data (e.g. RFF SNGP). While Linearised Laplace performs well on small toy datasets, unlike other baselines, it does not scale to larger networks, and has difficulties with various activation functions~\citep{uci_gap}. As such, we do not include it in the remaining experiments.

\begin{table*}
    \centering
    \text{Gap to Not-Gap Avg. Epistemic Uncertainty Relative Change}
    \resizebox{\textwidth}{!}{
    \begin{tabular}{ccccccc}
        \toprule
        & Yacht - FROUDE & Concrete - CEMENT
        & Concrete - SUPER & Boston - RM
        & Boston - LSTAT  & Boston - PTRATIO\\
		\cmidrule(lr){2-2} \cmidrule(lr){3-3} \cmidrule(lr){4-4} \cmidrule(lr){5-5} \cmidrule(lr){6-6} \cmidrule(lr){7-7}
        ENS BOOT & 0.00\% $\pm$ 0.00\% & \textbf{40.85\% $\pm$ 7.60\%} & \textbf{87.64\% $\pm$ 15.79\%} & -9.22\% $\pm$ 9.57\% & 6.40\% $\pm$ 10.97\% & -1.33\% $\pm$ 8.13\% \\
        ENS VAN & \textbf{26.68\% $\pm$ 11.75\%} & \textbf{120.70\% $\pm$ 26.68\%} & \textbf{163.28\% $\pm$ 26.81\%} & -5.61\% $\pm$ 11.41\% & \textbf{18.84\% $\pm$ 11.47\%} & \textbf{39.83\% $\pm$ 16.37\%} \\
        ENS ANC & 8.66\% $\pm$ 23.98\% & \textbf{93.03\% $\pm$ 25.66\%} & \textbf{149.31\% $\pm$ 31.30\%} & -8.94\% $\pm$ 11.15\% & 10.03\% $\pm$ 10.72\% & \textbf{31.12\% $\pm$ 22.25\%} \\
        \cmidrule{1-7}
        NLM & \textbf{24.08\% $\pm$ 17.23\%} & -5.92\% $\pm$ 3.63\% & 4.50\% $\pm$ 5.89\% & -7.47\% $\pm$ 3.57\% & -5.79\% $\pm$ 7.53\% & -3.81\% $\pm$ 6.32\% \\
        GP & \textbf{71.73\% $\pm$ 19.75\%} & \textbf{75.59\% $\pm$ 15.12\%} & \textbf{119.48\% $\pm$ 19.55\%} & -16.34\% $\pm$ 6.76\% & -2.40\% $\pm$ 9.20\% & \textbf{17.74\% $\pm$ 13.17\%} \\
        MCD & -52.43\% $\pm$ 10.17\% & \textbf{6.86\% $\pm$ 5.92\%} & \textbf{11.95\% $\pm$ 7.30\%} & -16.09\% $\pm$ 9.55\% & -11.76\% $\pm$ 9.48\% & -5.22\% $\pm$ 6.88\% \\
        SNGP & 38.33\% $\pm$ 63.97\% & -6.05\% $\pm$ 4.39\% & -2.93\% $\pm$ 4.23\% & -10.61\% $\pm$ 3.88\% & -6.43\% $\pm$ 5.46\% & -6.92\% $\pm$ 5.94\% \\
        BBVI & -10.08\% $\pm$ 6.15\% & -18.15\% $\pm$ 5.58\% & 10.85\% $\pm$ 12.68\% & -13.74\% $\pm$ 5.15\% & -31.91\% $\pm$ 3.27\% & \textbf{20.30\% $\pm$ 8.54\%} \\
        \cmidrule{1-7}
        LUNA & \textbf{59.17\% $\pm$ 36.87\%} & \textbf{55.93\% $\pm$ 28.83\%} & \textbf{416.02\% $\pm$ 197.69\%} & -11.09\% $\pm$ 5.20\% & \textbf{29.09\% $\pm$ 18.15\%} & \textbf{60.60\% $\pm$ 45.32\%} \\
	\end{tabular}}
    \caption{\textbf{On UCI ``gap'' data, LUNA captures higher in-gap epistemic uncertainty where baselines struggle.} This table computes the gap to not-gap Epistemic Uncertainty Relative Change (EPRC), which is positive when epistemic uncertainty is higher in the gap than it is where there is data (and zero or negative otherwise). Bolded results indicate models that express higher uncertainty inside the gap compared to outside by at least one standard deviation. LUNA and Ensemble are the only ones that, consistently (on 5 of the 6 data-sets), have significantly more epistemic uncertainty in the gap (on ``Boston - RM'', all models struggle).}
    \label{tab:uci-gap-epistemic}
\end{table*}

\begin{table*}[t]
    \centering
    \text{Root Mean Square Error (Test)}
    \resizebox{\textwidth}{!}{
    \begin{tabular}{ccccccccccccc}
        \toprule
        & \multicolumn{2}{c}{Yacht - FROUDE} & \multicolumn{2}{c}{Concrete - CEMENT} 
        & \multicolumn{2}{c}{Concrete - SUPER} & \multicolumn{2}{c}{Boston - RM}
        & \multicolumn{2}{c}{Boston - LSTAT} & \multicolumn{2}{c}{Boston - PTRATIO} \\
		\cmidrule(lr){2-3} \cmidrule(lr){4-5} \cmidrule(lr){6-7} \cmidrule(lr){8-9} \cmidrule(lr){10-11} \cmidrule(lr){12-13}
		& Not Gap & Gap & Not Gap & Gap & Not Gap & Gap & Not Gap & Gap & Not Gap & Gap & Not Gap & Gap \\
		\cmidrule{2-13}
		ENS BOOT & 1.21 $\pm$ 0.47 & 0.51 $\pm$ 0.08 & 5.27 $\pm$ 0.97 & 6.04 $\pm$ 0.10 & 4.70 $\pm$ 0.93 & 7.55 $\pm$ 0.26 & 2.81 $\pm$ 0.86 & 3.07 $\pm$ 0.10 & 3.28 $\pm$ 1.04 & 3.47 $\pm$ 0.14 & 3.46 $\pm$ 0.87 & 3.20 $\pm$ 0.08 \\
		ENS VAE & 0.84 $\pm$ 0.39 & 0.40 $\pm$ 0.04 & 5.06 $\pm$ 0.90 & 6.10 $\pm$ 0.18 & 4.44 $\pm$ 0.78 & 7.49 $\pm$ 0.18 & 2.78 $\pm$ 0.90 & 3.04 $\pm$ 0.08 & 3.12 $\pm$ 1.12 & 3.20 $\pm$ 0.13 & 3.41 $\pm$ 0.77 & 3.27 $\pm$ 0.08 \\
		ENS ANC & 0.91 $\pm$ 0.33 & 0.86 $\pm$ 0.07 & 5.54 $\pm$ 0.87 & 6.36 $\pm$ 0.22 & 5.33 $\pm$ 0.69 & 7.77 $\pm$ 0.46 & 2.90 $\pm$ 0.74 & 3.09 $\pm$ 0.12 & 3.30 $\pm$ 1.17 & 3.33 $\pm$ 0.15 & 3.40 $\pm$ 0.87 & 3.16 $\pm$ 0.07 \\
		\cmidrule{1-13}
		NLM & 0.65 $\pm$ 0.26 & 0.72 $\pm$ 0.14 & 5.31 $\pm$ 0.97 & 7.01 $\pm$ 0.49 & 4.60 $\pm$ 0.94 & 8.47 $\pm$ 0.36 & 3.02 $\pm$ 0.87 & 3.21 $\pm$ 0.11 & 3.69 $\pm$ 1.51 & 3.93 $\pm$ 0.35 & 3.70 $\pm$ 0.67 & 3.68 $\pm$ 0.12 \\
		GP & 1.89 $\pm$ 0.54 & 1.37 $\pm$ 0.21 & 6.01 $\pm$ 0.89 & 6.21 $\pm$ 0.07 & 5.91 $\pm$ 0.78 & 8.01 $\pm$ 0.16 & 3.10 $\pm$ 0.91 & 3.27 $\pm$ 0.19 & 3.52 $\pm$ 1.16 & 3.32 $\pm$ 0.15 & 3.45 $\pm$ 0.78 & 3.28 $\pm$ 0.04 \\
		MCD & 0.89 $\pm$ 0.31 & 6.78 $\pm$ 0.37 & 5.09 $\pm$ 1.07 & 7.27 $\pm$ 0.40 & 4.80 $\pm$ 0.91 & 7.93 $\pm$ 0.34 & 3.45 $\pm$ 1.15 & 3.17 $\pm$ 0.10 & 3.40 $\pm$ 1.09 & 4.08 $\pm$ 0.36 & 3.69 $\pm$ 1.02 & 3.27 $\pm$ 0.18 \\
		SNGP & 1.04 $\pm$ 0.68 & 1.31 $\pm$ 1.11 & 5.15 $\pm$ 0.74 & 5.93 $\pm$ 0.17 & 5.00 $\pm$ 0.69 & 7.33 $\pm$ 0.32 & 3.07 $\pm$ 0.56 & 3.41 $\pm$ 0.18 & 3.79 $\pm$ 1.02 & 4.18 $\pm$ 0.26 & 3.75 $\pm$ 1.00 & 3.77 $\pm$ 0.21 \\
		BBVI & 17.27 $\pm$ 5.87 & 30.05 $\pm$ 2.99 & 5.68 $\pm$ 0.80 & 6.36 $\pm$ 0.07 & 24.17 $\pm$ 7.56 & 54.56 $\pm$ 4.58 & 3.47 $\pm$ 0.87 & 3.53 $\pm$ 0.05 & 3.76 $\pm$ 1.22 & 3.82 $\pm$ 0.12 & 9.16 $\pm$ 3.29 & 30.52 $\pm$ 2.84 \\
		\cmidrule{1-13}
        LUNA & 1.16 $\pm$ 0.42 & 0.57 $\pm$ 0.10 & 5.50 $\pm$ 1.32 & 7.12 $\pm$ 0.36 & 4.92 $\pm$ 0.66 & 10.13 $\pm$ 1.05 & 3.34 $\pm$ 1.09 & 3.17 $\pm$ 0.38 & 3.57 $\pm$ 1.44 & 3.92 $\pm$ 0.26 & 3.58 $\pm$ 1.09 & 3.34 $\pm$ 0.14 \\
	\end{tabular}}
	\caption{\textbf{On UCI ``gap'' data, LUNA has comparable RMSE to baselines.}}
	\label{tab:uci-gap-rmse}
\end{table*}

\begin{table*}[t]
    \centering
    \text{Avg. Log-Likelihood (Test)}
    \resizebox{\textwidth}{!}{
    \begin{tabular}{ccccccccccccc}
        \toprule
        & \multicolumn{2}{c}{Yacht - FROUDE} & \multicolumn{2}{c}{Concrete - CEMENT} 
        & \multicolumn{2}{c}{Concrete - SUPER} & \multicolumn{2}{c}{Boston - RM}
        & \multicolumn{2}{c}{Boston - LSTAT} & \multicolumn{2}{c}{Boston - PTRATIO} \\
		\cmidrule(lr){2-3} \cmidrule(lr){4-5} \cmidrule(lr){6-7} \cmidrule(lr){8-9} \cmidrule(lr){10-11} \cmidrule(lr){12-13}
		& Not Gap & Gap & Not Gap & Gap & Not Gap & Gap & Not Gap & Gap & Not Gap & Gap & Not Gap & Gap \\
		\cmidrule{2-13}
		ENS BOOT & N/A & N/A & N/A & N/A & N/A & N/A & N/A & N/A & N/A & N/A & N/A & N/A \\
		ENS VAN & N/A & N/A & N/A & N/A & N/A & N/A & N/A & N/A & N/A & N/A & N/A & N/A \\
		ENS ANC & N/A & N/A & N/A & N/A & N/A & N/A & N/A & N/A & N/A & N/A & N/A & N/A \\
		\cmidrule{1-13}
		NLM & -1.29 $\pm$ 0.90 & -1.45 $\pm$ 0.52 & -3.15 $\pm$ 0.26 & -3.71 $\pm$ 0.19 & -2.97 $\pm$ 0.25 & -4.29 $\pm$ 0.16 & -2.56 $\pm$ 0.30 & -2.58 $\pm$ 0.03 & -2.82 $\pm$ 0.70 & -2.79 $\pm$ 0.12 & -2.73 $\pm$ 0.25 & -2.72 $\pm$ 0.04 \\
		GP & -1.76 $\pm$ 0.30 & -1.56 $\pm$ 0.04 & -3.18 $\pm$ 0.14 & -3.17 $\pm$ 0.01 & -3.19 $\pm$ 0.19 & -3.40 $\pm$ 0.01 & -2.53 $\pm$ 0.12 & -2.60 $\pm$ 0.02 & -2.62 $\pm$ 0.18 & -2.61 $\pm$ 0.02 & -2.63 $\pm$ 0.19 & -2.63 $\pm$ 0.01 \\
		MCD & -1.13 $\pm$ 0.27 & -32.49 $\pm$ 12.07 & -2.94 $\pm$ 0.18 & -3.54 $\pm$ 0.06 & -2.96 $\pm$ 0.18 & -3.71 $\pm$ 0.06 & -2.59 $\pm$ 0.22 & -2.52 $\pm$ 0.02 & -2.59 $\pm$ 0.20 & -2.69 $\pm$ 0.11 & -2.59 $\pm$ 0.16 & -2.58 $\pm$ 0.05 \\
		SNGP & -4.35 $\pm$ 5.66 & -8.14 $\pm$ 13.55 & -3.12 $\pm$ 0.21 & -3.35 $\pm$ 0.05 & -3.07 $\pm$ 0.20 & -3.87 $\pm$ 0.14 & -2.56 $\pm$ 0.17 & -2.65 $\pm$ 0.06 & -2.81 $\pm$ 0.39 & -2.90 $\pm$ 0.10 & -2.81 $\pm$ 0.39 & -2.77 $\pm$ 0.07 \\
		BBVI & -68.40 $\pm$ 31.42 & -207.53 $\pm$ 34.99 & -3.15 $\pm$ 0.17 & -3.30 $\pm$ 0.02 & -6.74 $\pm$ 2.24 & -27.14 $\pm$ 3.45 & -2.63 $\pm$ 0.09 & -2.64 $\pm$ 0.01 & -2.70 $\pm$ 0.21 & -2.63 $\pm$ 0.02 & -3.49 $\pm$ 0.35 & -8.14 $\pm$ 1.22 \\
		\cmidrule{1-13}
        LUNA & -2.82 $\pm$ 2.09 & -0.96 $\pm$ 0.16 & -3.14 $\pm$ 0.30 & -3.55 $\pm$ 0.13 & -3.00 $\pm$ 0.13 & -4.28 $\pm$ 0.36 & -2.56 $\pm$ 0.15 & -2.54 $\pm$ 0.03 & -2.72 $\pm$ 0.44 & -2.75 $\pm$ 0.07 & -2.69 $\pm$ 0.31 & -2.65 $\pm$ 0.06 \\
	\end{tabular}}
	\caption{\textbf{On UCI ``gap'' data, LUNA has comparable average log-likelihood to baselines.} That is, on the non-gap region, it has comparable log-likelihood to baselines, and on the gap region, its log-likelihood decreases since epistemic uncertainty is increased (see Table \ref{tab:uci-gap-epistemic}).}
	\label{tab:uci-gap-ll}
\end{table*}

\begin{table*}[t]
    \centering
    \text{Bayesian Optimization}
	\resizebox{\textwidth}{!}{
    \begin{tabular}{cc||c|cccccccc}
        \toprule
        Function & Steps & LUNA & GP & NLM & MCD & ENS ANC & ENS BOOT & ENS VAN & SNGP & BBVI \\
        \midrule
		branin & 50 & 
		0.01 $\pm$ 0.00 & 	 
		0.00 $\pm$ 0.00 & 	 
		0.01 $\pm$ 0.01 & 	 
		0.01 $\pm$ 0.01 & 	 
		0.01 $\pm$ 0.01 & 	 
		0.06 $\pm$ 0.12 & 	 
		0.01 $\pm$ 0.02 & 	 
		0.00 $\pm$ 0.00 & 	 
		0.01 $\pm$ 0.00  	 
		 \\ 
		hartmann6 & 200 & 
		0.32 $\pm$ 0.02 & 	 
		0.01 $\pm$ 0.00 & 	 
		0.57 $\pm$ 0.44 & 	 
		0.76 $\pm$ 0.25 & 	 
		0.23 $\pm$ 0.21 & 	 
		0.65 $\pm$ 0.28 & 	 
		0.68 $\pm$ 0.28 & 	 
		0.22 $\pm$ 0.00 & 	 
		0.71 $\pm$ 0.23 	 
		\\ 
		svm & 30 & 
		1.19 $\pm$ 0.12 & 	 
		1.20 $\pm$ 0.00 & 	 
		1.20 $\pm$ 0.06 & 	 
		1.10 $\pm$ 0.00 & 	 
		1.18 $\pm$ 0.12 & 	 
		1.11 $\pm$ 0.18 & 	 
		1.13 $\pm$ 0.05 & 	 
		1.19 $\pm$ 0.14 & 	 
		1.30 $\pm$ 0.35  	 
		 \\ 
		logistic & 30 & 
		7.64 $\pm$ 0.06 & 	 
		7.40 $\pm$ 0.00 & 	 
		7.64 $\pm$ 0.10 & 	 
		7.91 $\pm$ 0.29 & 	 
		7.66 $\pm$ 0.09 & 	 
		7.64 $\pm$ 0.08 & 	 
		7.59 $\pm$ 0.07 & 	 
		7.64 $\pm$ 0.07 & 	 
		7.92 $\pm$ 0.33  	 
		 \\ 
    \end{tabular}
    }
    \caption{\textbf{LUNA is comparable to baselines on Bayesian Optimization.} The table shows the different between the optima found via BayesOpt vs. the true optima $|f(x) - f(x^*)|$ across random restarts. Benchmarks compiled by \cite{eggensperger_toward}.}
    \label{tab:bayesopt}
\end{table*}

\paragraph{LUNA training capture in-between uncertainty in UCI gap datasets.} We use 6 UCI ``gap'' datasets \citep{uci_gap} with artificially created gaps in the training set (described in Appendix \ref{sec:uci-gap-examples}), and demonstrate that, as desired, LUNA's epistemic uncertainty distinguishes between the gap region and data-rich regions of the input space without significant decrease in predictive performance. 
Table \ref{tab:uci-gap-epistemic} shows that across all but one of the data-sets (on which all methods struggle), only LUNA and Vanilla Ensemble consistently learn models that are more uncertain on out-of-distribution (or in-gap) test data, relative to within distribution test data.
We evaluate both test log-likelihood (Table \ref{tab:uci-gap-ll}), and test RMSE (Table \ref{tab:uci-gap-rmse}) on data sampled inside vs. outside the gap to show that LUNA's fit is comparable to that of the remaining baselines.

\subsection{LUNA bases are competitive with baselines on downstream tasks}

\paragraph{LUNA bases are better for transfer learning, even with few features.} 
We assess how useful the learned feature bases are when the Bayesian regression model is retrained given new data -- specifically, data under covariate shift.
For this task, we compare LUNA against traditional NLM training, as well as against SNGP, since those are the only methods that consist of a Bayesian model trained on a deterministic feature map. 
We show that across all numbers of features in $\phi_\theta$, LUNA bases outperform both baselines. We construct a synthetic 1-D dataset (the ``Squiggle Gap'' data in Appendix \ref{sec:synthetic-data}), visualized in Figure \ref{fig:squiggle}, which unlike the ``Cubic Gap'' data, has unexpected variations in the held-out gap region. 
After training on the non-gap data, we fix the feature map $\phi_\theta$, and we infer the posterior $p(\mathbf{w} | \mathcal{D}, \theta)$ over the last layer using data from the gap. Figure \ref{fig:ll_vs_features_tf_relearn} shows that LUNA bases can easily be adapted to modeling data from the gap, while feature bases of baseline methods struggle to adapt, even as we increase the number of features.

\paragraph{LUNA training is comparable to baselines on BayesOpt.} We compare LUNA training against baselines on 4 Bayesian optimization benchmark tasks (see BayesOpt in Appendix \ref{sec:synthetic-data}).
In Table \ref{tab:bayesopt}, we see that LUNA's performance is comparable to baselines.

\begin{figure}[!t]
    \centering
    
  \begin{subfigure}{0.48\textwidth}
  \centering
    \centering
    \includegraphics[width=1.0\linewidth]{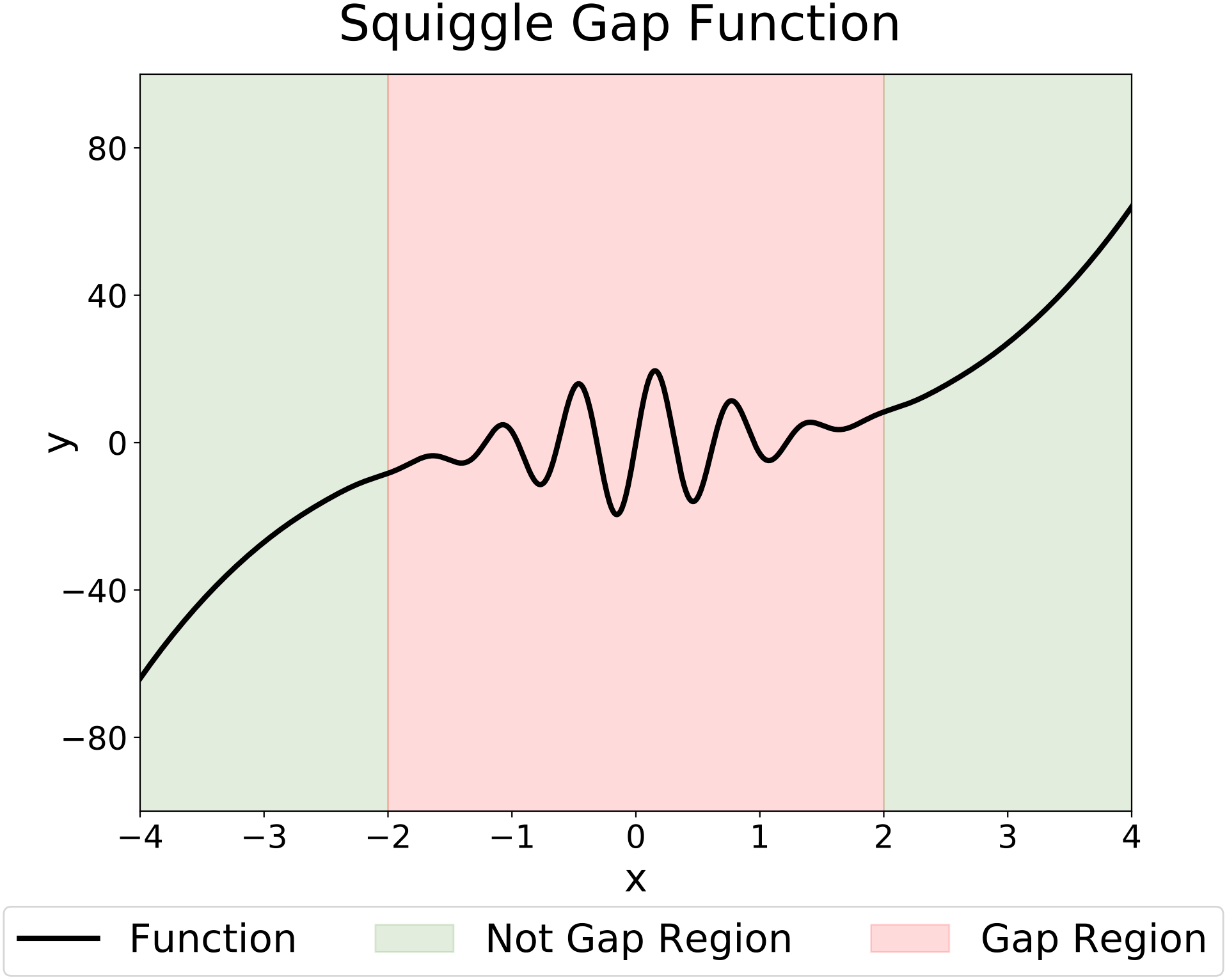}
    \caption{The ``Squiggle Gap'' data for the Transfer Learning Task: $y =x^3 + 20\exp(-x^2) \cdot \sin(10x)$.}
    \label{fig:squiggle}
  \end{subfigure}
  ~
  \begin{subfigure}{0.48\textwidth}
    \centering
    \includegraphics[width=1.0\linewidth]{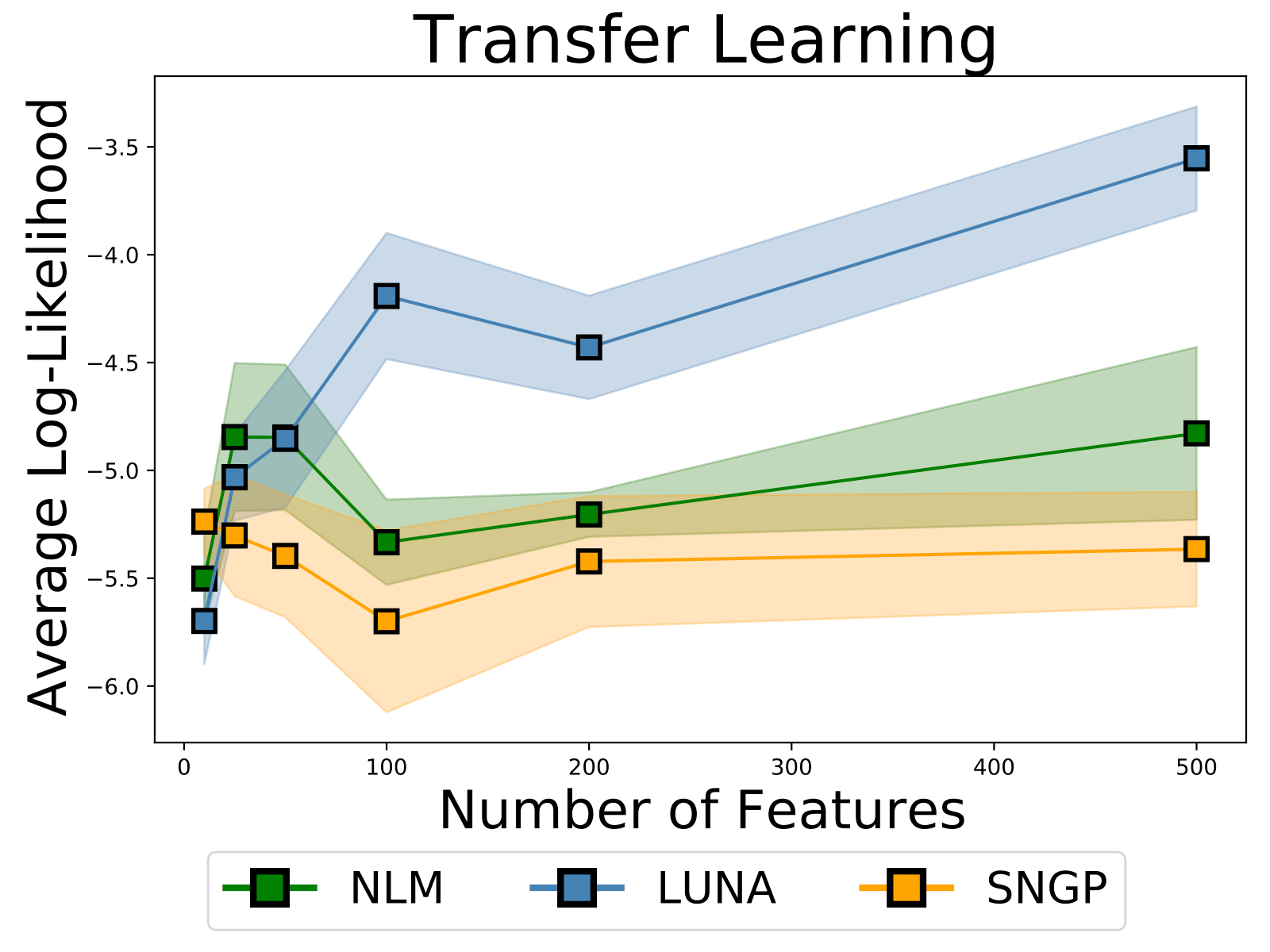}
    \caption{Transfer Learning Task}
    \label{fig:ll_vs_features_tf_relearn}
   \end{subfigure}
   
   \caption{\textbf{LUNA out-perform traditional NLM training and SNGP on transfer learning task, regardless of how many features are given.} We use the ``Squiggle Gap'' data-set (left), described in Appendix \ref{sec:synthetic-data}. On the right, we see that regardless of the number of features, LUNA outperforms NLM and SNGP under covariate shift when the neural network feature extractor is fixed but the Bayesian model is retrained on data from the gap.}
   \label{fig:squiggle-gap-tasks}
\end{figure}

\subsection{Application: LUNA's predictive uncertainty detects sampling bias in image data.} 
We show that LUNA's predictive uncertainty is sensitive to out-of-distribution points, and is therefore capable of detecting sampling bias in an image data-set.
For this task, we use LUNA trained NLMs with a ResNet18 architecture to perform age regression on the Wikipedia faces dataset, containing 62,328 facial images of actors (details in Appendix \ref{sec:training}).

In our first experiment, we trained LUNA on 26,375 faces of only male actors and test on 10,918 male (in-distribution) and 10,918 female (out-of-distribution) faces. 
We obtain a Mean Absolute Error (MAE) comparable to that of a vanilla ResNet18 ($9.52$ on training data, and $10.22$ and $11.78$ on in-distribution and out-of-distribution test-data, respectively). At the same time, the epistemic uncertainty is on average $168\%$ higher on the out-of-distribution test data than on training data (in comparison to $14\%$ higher on the in-distribution test data relative to on training data). In our second experiment, we trained LUNA on 28,271 faces of individuals younger than 30 or older than 40, while testing on 9,424 in-distribution faces and 10,376 faces of individuals between the ages of 30 and 40 (out-of-distribution). Here, we again see higher average epistemic uncertainty on out-of-distribution test data (27\% increase) than on in-distribution test data (2.02\% increase).  

In both of these experiments, we show that the predictive uncertainty provided by LUNA trained models can be used to identify test data from underrepresented sub-populations; predictions for such out-of-distribution test data can then be deferred to human experts. This task also shows that LUNA can leverage structured data more easily than GPs by using task-appropriate network architecture.

\section{Discussion}

\paragraph{Traditional NLM inference hinders learning models that express in-between uncertainty.} 
Traditional NLM inference methods learn feature bases that span a limited class of functions under the prior predictive distribution, and hence the NLM posterior predictive uncertainty does not distinguish data-poor regions from data rich-ones. We identify the cause of the problem: MLE training does not explicitly encourage the feature bases to span diverse functions away from training data, and MAP training discourages the feature bases from spanning a diverse function class. 
Moreover, we show that this problem cannot be solved by maximizing the marginal log-likelihood. 

\paragraph{UNA is a general NLM training framework for encoding for task-specific desiderata.}
We present a novel framework for training feature bases for NLMs. Our framework leverages the insight that it is difficult to train NLMs with task-specific desiderata by directly specifying an appropriate objective on the feature basis; instead, it is easier to work in ``function-space'', by designing an objective for functions spanned by the basis. 
This objective therefore indirectly encourages the feature basis to satisfy our desiderata.

We then propose an instantiation of this framework, LUNA, which provides a general-purpose method to obtain uncertainties that are calibrated in data-rich regions and higher in data-poor ones.
On both real and synthetic data, we demonstrate that LUNA training produces models with high-quality predictions on in-distribution inputs, while remaining uncertain on out-of-distribution inputs. In comparison, nearly all baselines fail to consistently remain uncertain on these out-of-distribution inputs.
We further show that LUNA outperforms baselines on a number of downstream tasks that require capturing in-between uncertainty.
Lastly, we show that LUNA can easily be scaled and adapted to work on architectures (e.g. ResNet), and that LUNA's predictive uncertainty can detect sampling bias in image data. 


\paragraph{Future work.} LUNA bases span functions that both fit the observed data and generalize differently in data-poor regions; these bases are useful for tasks in which it would suffice to use any model with predictive uncertainty that can distinguish between data-sparse and data-rich regions.
However, for some tasks, one needs to additionally incorporate domain knowledge, e.g. a length-scale or amplitude for the function likely under the prior. In future work, we hope to develop additional instantiations of UNA that can encode for other types of domain knowledge. 

In addition to proposing other instantiations of UNA, in future work we also hope to draw a broader theoretical connection between NLM training and hyper-parameter selection.
That is, one can regard the feature map of an NLM as a hyper-parameter of the Bayesian linear regression that is trained on top of it. Thus, our work shows that there is a general need for \emph{uncertainty-aware} frameworks for hyper-parameter selection as alternatives to likelihood-based selection. 

\section{Conclusion}
In this paper, we show that while NLMs are scalable and easy to implement, when trained with traditional inference, they cannot be naively deployed in risk-sensitive applications, since traditional NLM inference methods yield models that are overly certain on data from data-scarce regions of the input-space. We then propose a novel training framework, UNA, as well as an instantiation of this framework, LUNA, that mitigates this issue. LUNA training explicitly encourages for functional diversity under the NLM's prior predictive, and as a results learns models that are more uncertain on out-of-distribution inputs than on in-distribution inputs.

\acks{ST, CL, and WP are supported by the Harvard Institute of
Applied Computational Sciences. YY acknowledges support from NIH 5T32LM012411-04 and from IBM Research.}

\vskip 0.2in
\bibliography{the}


\newpage

\appendix

\addcontentsline{toc}{section}{Appendix} 
\part{Appendix} 
\parttoc 

\section{Neural Linear Model Details}
\label{sec:models}

\subsection{Posterior Predictive}

An NLM uses a neural network to parameterize basis functions for a Bayesian linear regression model by treating the output weights of the network probabilistically, while treating the rest of the network's parameters $\theta$ as hyper-parameters. 
Using notation from Section \ref{sec:background} and following standard Bayesian linear regression analysis, we can derive the posterior predictive,
\begin{align}
    p(y_\star|\mathbf{x}_\star, \mathcal{D}) &= \mathcal{N}(y_\star; \mathbf{w}_N^\intercal\phi_\theta(\mathbf{x}_\star), \sigma^2 + \phi_\theta(\mathbf{x}_\star)^\intercal\mathbf{V}_N\phi_\theta(\mathbf{x}_\star)),
\end{align}
where
\begin{equation}
    \begin{split}
        \mathbf{w}_N &= \frac{1}{\sigma^2}\mathbf{V}_N\mathbf{\Phi_\theta}^\intercal\mathbf{y}\\
        \mathbf{V}_N^{-1} &= \frac{1}{\alpha}\mathbf{I}_{M\times M} + \frac{1}{\sigma^2}\mathbf{\Phi_\theta}^\intercal\mathbf{\Phi_\theta}.
    \end{split}
\label{eq:nlm_posterior}
\end{equation}

\subsection{MAP Training}

For the MAP-trained NLM, we maximize the following objective:
\begin{equation}
    \begin{split}
        \mathcal{L}_{\mathrm{MAP}}(\theta_\mathrm{Full}) &= \log{\mathcal{N}\left(\mathbf{y}; \mathbf{\Phi_\theta w}, \sigma^2\mathbf{I}\right)} - \gamma\norm{\theta_\mathrm{Full}}_2^2\\
&=-\frac{N}{2}\log{2\pi\sigma^2} - \frac{1}{2\sigma^2}\norm{\mathbf{y}-\mathbf{\Phi_\theta w}}_2^2 -\gamma\norm{\theta_\mathrm{Full}}_2^2,
    \end{split}
\end{equation}
where $\theta_\mathrm{Full}$ represents the parameters of the full network (including the output weights). We would then extract $\theta$ from $\theta_\mathrm{Full}$ and perform the Bayesian linear regression as above.

\subsection{Marginal Likelihood Training}
\label{sec:marginal_ll}

We optimize $\theta$ to maximize the evidence or log marginal likelihood of the data by integrating out $\mathbf{w}$. For training stability and identifiability, we further regularize $\theta$ as done by ~\cite{Rasmussen}. The full objective is hence:
\begin{equation}
    \begin{split}
        \mathcal{L}_{\mathrm{Marginal}}(\theta_\mathrm{Full}) &= \log{\int p(\mathbf{y}|\mathbf{X}, \mathbf{w})p(\mathbf{w})d\mathbf{w}}\\
&=-\frac{N}{2}\log{2\pi\sigma^2} - \frac{1}{2\sigma^2}\norm{\mathbf{y}-\mathbf{\Phi_\theta w}}_2^2
-\frac{M}{2}\log{\alpha}-\frac{1}{2\alpha}\norm{\mathbf{w}_N}_2^2
-\frac{1}{2}\log{|\mathbf{V}_N|}.
    \end{split}
\end{equation}
\cite{Rasmussen} note that the addition of a regularization term $\gamma\norm{\theta}_2^2$ to $\mathcal{L}_{\mathrm{Marginal}}$ is necessary in order to estimate the observation noise variance, which otherwise tends towards zero. In Proposition \ref{thm:marg_blowup}, we also show that without this regularization term, the features $\mathbf{\Phi_\theta}$ experience pathological blow-up for ReLU networks, since large $\mathbf{\Phi_\theta}$ reduce the magnitude of the posterior mean $\mathbf{w}_N$, and hence increase $\mathcal{L}_{\mathrm{Marginal}}$.

\ThmMarginalBlowup*

\begin{proof}
Let us first establish the relationship between $\mathbf{w}_N$ and $\mathbf{\Phi_\theta}$ in the asymptotic case $\norm{\mathbf{\Phi_\theta}}\to\infty$.
From Equation \ref{eq:nlm_posterior}, 
\begin{align*}
    \frac{1}{\sigma^2}\mathbf{\Phi_\theta}^\intercal\mathbf{\Phi_\theta} \gg \frac{1}{\alpha}\mathbf{I}_{M\times M} 
    &\implies \mathbf{V}_N^{-1}\to\frac{1}{\sigma^2}\mathbf{\Phi_\theta}^\intercal\mathbf{\Phi_\theta}\\
    &\implies \mathbf{V}_N\to\sigma^2\left(\mathbf{\Phi_\theta}^\intercal\mathbf{\Phi_\theta}\right)^{-1}
\end{align*}
Hence,
$$\mathbf{w}_N\to\left(\mathbf{\Phi_\theta}^\intercal\mathbf{\Phi_\theta}\right)^{-1}\mathbf{\Phi_\theta}^\intercal\mathbf{y}\implies\norm{\mathbf{w}_N}\thicksim\frac{1}{\norm{\mathbf{\Phi_\theta}}}.$$
Note that the loss $\| \mathbf{y} -\mathbf{\Phi_\theta}\mathbf{w}\|$ is equal to $\| \mathbf{y} -\mathbf{\Phi_{\theta^c}}\mathbf{w}^c\|$, since $\mathbf{\Phi_{\theta^c}}$ is $\mathbf{\Phi_{\theta}}$ scaled by $c$ and this scaling is canceled by $\mathbf{w}^c = \frac{1}{c} \mathbf{w}$. 
Thus, since $\|\mathbf{w}^c_N\| < \|\mathbf{w}_N\|$, we have that $\mathcal{L}_{\mathrm{Marginal}}(\theta_\mathrm{Full}) < \mathcal{L}_{\mathrm{Marginal}}(\theta^c_\mathrm{Full})$.

\end{proof}
The above proposition tells us that that we can continue to increase $\mathcal{L}_{\mathrm{Marginal}}$ by reducing $\norm{\mathbf{w}_N}$. Hence, if we do not regularize $\theta$, the training will continually increase $\norm{\mathbf{\Phi_\theta}}$ to affect a decrease in $\norm{\mathbf{w}_N}$.

The addition of the regularization term $\gamma\norm{\theta}_2^2$ to $\mathcal{L}_{\mathrm{Marginal}}$, however, biases training towards inexpressive feature bases for the same reason we identified in Section \ref{sec:nlm_pathologies}. In Figure \ref{fig:marg_highgamma}, we show that with regularization, the feature bases learned by optimizing $\mathcal{L}_{\mathrm{Marginal}}$ are inexpressive. In Figure \ref{fig:marg_lowgamma} we see that even with $\gamma$ set close to zero, the learned feature bases are not consistently expressive across random restarts.

\section{LUNA's Diversity Penalty}
\label{luna-training-objective}

We adopted the diversity penalty in LUNA's objective from the work of \cite{diversity_enforcement}.
We use the cosine similarity function on the gradients of the auxiliary regressors:
\begin{equation*}
    \begin{split}
        \text{CosSim}^2&\left(\nabla_{\mathbf{x}} f_i(\mathbf{x}), \nabla_{\mathbf{x}} f_j(\mathbf{x}) \right) = 
        \frac{\left(\nabla_{\mathbf{x}} f_i(\mathbf{x})^\intercal \nabla_{\mathbf{x}} f_j(\mathbf{x})\right)^2}
        {\left(\nabla_{\mathbf{x}} f_i(\mathbf{x})^\intercal  \nabla_{\mathbf{x}} f_i(\mathbf{x})  \right) \left( \nabla_{\mathbf{x}} f_j(\mathbf{x})^\intercal \nabla_{\mathbf{x}} f_j(\mathbf{x})\right)}
    \end{split}
\end{equation*}
This acts as a measure of orthogonality, equal to one when the two inputs are parallel, and 0 when they are orthogonal.
A higher penalty, $\lambda$, in the training objective penalizes parallel components, hence enforcing diversity.

In practice, these gradients can be computed using a finite differences approximation.
That is, we approximate gradients as:
\begin{equation*}
    \frac{\partial f_i(\mathbf{x})}{\partial x_d} \approx \frac{f_i\left(\mathbf{x} + \delta\mathbf{x}_d\right) - f_i\left(\mathbf{x}\right)}{\delta x_d}
\end{equation*}
where $\delta\mathbf{x}_d = [0,\hdots,0,\delta x_d,0, \hdots,0]^\intercal$ represents a D-dimensional vector of zeros with a small perturbation in the $d^\text{th}$ dimension. 
We sample these perturbations according to $\delta x_d \sim \mathcal{N}(0, \epsilon)$, where $\epsilon$ can be set using the range of the data.

In our experiments, we anneal the weight of the diversity penalty using a number of different schedules. We also scaled the diversity penalty by a factor $C = 2B/(M(M-1))$, where $B$ is batch size, and $M$ is the number of auxiliary regressors, so that it carried the same weight across different values of $B$ and $M$.
The three annealing schedules we tested were $f_\text{sqrt}(t) = C\sqrt{t/T}$, $f_\text{sigmoid} = C/(1 + \exp(-6t/T + 3))$, and $f_\text{tanh}(t) = C(\tanh(6t/T - 3) + 1)/2$, where $T$ is the number of epochs.
We use the model selection schemed described in Section \ref{sec:luna} to select an annealing schedule.

\section{Experimental Setup} \label{sec:exp-setup}

\subsection{Baseline Methods}

\paragraph{Linearised Laplace (Lin Lap)~\cite{uci_gap}.} Linearised Laplace is an approximate inference method for BNNs that finds a mode of the BNN posterior, and then fits a Gaussian to that mode. 
As noted by \cite{uci_gap}, this inference method scales cubically with the number of parameters in the model.
Additionally, \cite{uci_gap} note that Linearised Laplace does not work with all activation functions, e.g. ReLU (which is commonly used in standard architectures for its desirable extrapolation properties).

\paragraph{Monte Carlo Dropout (MCD)~\citep{mcd}.} MCD casts dropout training in neural networks as approximate Bayesian inference in Deep Gaussian Processes. 
While dropout during training is a common feature of modern neural network architectures, MCD maintains the dropout during testing time too. 
Using multiple stochastic forward passes through the network and averaging the results, MCD is able to obtain a predictive distribution.

\paragraph{Spectral-normalized Neural Gaussian Process (SNGP)~\citep{liu2020simple}.} SNGP is a model that was originally designed for classification, but in this work we have adapted it for regression.
It uses a distance-aware Recurrent Neural Network (RNN) to learn a feature map for the data. This feature map is then used as input for a GP.
In this work, we use a GP with an RBF kernel at the last layer and perform inference either analytically (SNGP) or with a random Fourier feature expansion (RFF SNGP). 

\paragraph{Anchored Ensembles~\citep{pearce2018bayesian}.} Anchored Ensembles is an alternative training method for neural network ensembles, in which the ensemble members are initialized using an ``anchoring'' distribution and are then regularized towards their initial parameters:
\begin{equation*}
    Loss_j = \frac{1}{N}\left\lVert\mathbf{y} - \hat{\mathbf{y}}_j\right\rVert_2^2 + \frac{1}{N} \left\lVert\mathbf{\Gamma}^{1/2}(\mathbf{\theta}_j - \mathbf{\theta}_{anc, j})\right\rVert_2^2.
\end{equation*}
The regularization matrix is defined as $\text{diag}(\mathbf{\Gamma}) = \sigma_{\epsilon}^2/\sigma_{prior_i}^2$, where $\sigma_{\epsilon}$ is the noise variance of the data, $\sigma_{prior_i}$ is the anchor variance.
That is, the anchoring parameters are sampled according to $\mathbf{\sigma} \sim \mathcal{N}\left( \mathbf{\mu}_{prior}, \mathbf{\Sigma}_{prior} \right)$.
In our case, we used one value of $\sigma_{prior}$ and always set $\mathbf{\mu}_{anchor} = \mathbf{0}$.
Additionally, we follow the original work and decouple the initial parameters from the anchoring distribution, where initial parameters are sampled according to $\theta \sim \mathcal{N}(0, \sigma_{init}^2)$.
These anchoring points ensure that the ensemble fits to the data but also maintains its original diversity from the initialization, thereby allowing it capture uncertainty in data-scarce regions.
Moreover, this training method implicitly performs Bayesian inference on a Gaussian Process as the ensemble and network sizes tend towards infinity. 

\paragraph{Bootstrap Ensembles.} We construct an ensemble of neural network regressors, each trained on a different sub-sample of the original training set. We then use mean and variance of the outputs across all ensemble members as a predictive distribution.

\paragraph{Vanilla Ensembles.}
We construct an ensemble of neural network regressors, each initialized randomly and trained independently via gradient descent. We then use mean and variance of the outputs across all ensemble members as a predictive distribution.

\subsection{Evaluation Metrics} \label{sec:eval-metrics}

We used the following evaluation metrics to compare the performance of UNA against our baselines:

\paragraph{Avg. Test Log-Likelihood (LL).}
We compute average the probability of each test point $\mathbf{x}_n^*, y_n^*$ under the model's posterior predictive:
\begin{align}
    \text{LL} &= \frac{1}{N} \sum\limits_{n=1}^N p(y_n^* | \mathbf{x}_n^*).
\end{align}
For NLM (and UNA), this is computed as:
\begin{align}
    \text{LL} &= \frac{1}{N} \sum\limits_{n=1}^N \mathbb{E}_{p(\mathbf{w} | \mathcal{D}, \theta)} \left[ \mathcal{N}\left( y_n^* | \phi_\theta(\mathbf{x}_n^*)^\intercal \mathbf{w}, \sigma^2 \mathbf{I} \right) \right].
\end{align}

\paragraph{Test Root Mean Square Error (RMSE).}
We compute the RMSE between the mean of the posterior predictive, $\hat{y}_n^*$ and the corresponding true outcome $y_n^*$ across the test set. 
\begin{align}
    \text{RMSE} &= \sqrt{\frac{1}{N} \sum\limits_{n=1}^N \left( \hat{y}_n^* - y_n^* \right)^2}.
\end{align}
For NLM (and UNA), this is computed as:
\begin{align}
    \text{RMSE} &= \sqrt{\frac{1}{N} \sum\limits_{n=1}^N \left( \mathbb{E}_{p(\mathbf{w} | \mathcal{D}, \theta)} \left[  \phi_\theta(\mathbf{x}_n^*)^\intercal \mathbf{w} \right] - y_n^* \right)^2}.
\end{align}

\paragraph{Epistemic Uncertainty (EU).}
Since in this work, we assume homoscedastic noise for all models,
we compute epistemic uncertainty as the standard deviation of the posterior predictive, without the added observation noise $\sigma^2 \mathbf{I}$:
\begin{align}
    \text{EU} &= \frac{1}{N} \sum\limits_{n=1}^N \sqrt{\mathbb{V} [y_n^* | \mathbf{x}_n^*] - \sigma^2 \mathbf{I}}.
\end{align}
For NLM (and UNA), this is computed as:
\begin{align}
    \text{EU} &= \frac{1}{N} \sum\limits_{n=1}^N \sqrt{\mathbb{V}_{p(\mathbf{w} | \mathcal{D}, \theta)} \left[  \phi_\theta(\mathbf{x}_n^*)^\intercal \mathbf{w} \right]}.
\end{align}
In this work, we evaluate epistemic uncertainty both where there is data (i.e. ``not in gap'') and where there is no data (i.e. ``in the gap''). 

\paragraph{Gap to Non-Gap Epistemic Uncertainty Relative Change.}
To check whether a model has higher epistemic uncertainty where there is no data, we compute the relative change between the epistemic uncertainty in the gap vs. not-gap regions:
\begin{align}
    \text{EURC} &= \frac{\text{EU}_\text{GAP} - \text{EU}_\text{NOT-GAP}}{\text{EU}_\text{GAP}}.
\end{align}
When EURC positive, it means that the model is more uncertain in data-sparse regions, and when it is negative, it is even more confident where there no data relative to where there is data. 

\subsection{Synthetic Data} \label{sec:synthetic-data}

\paragraph{Cubic Gap Example.}
Following ~\cite{Rasmussen}, we construct a synthetic 1-D dataset comprising 100 train and 100 test pairs $(x, y)$, where $x$ is sampled uniformly in the range $[-4,-2]\cup[2,4]$ and $y$ is generated as $y =x^3 + \epsilon, \epsilon\sim\mathcal{N}(0, 3^2)$. 

\paragraph{Squiggle Gap Example (for the Transfer Learning Task).}
We construct a synthetic 1-D dataset in which the training $x$'s are uniformly sampled in the range $[-4,-2]\cup[2,4]$
and $y =x^3 + 20\exp(-x^2) \cdot \sin(10x) + \epsilon, \epsilon\sim\mathcal{N}(0, 3^2)$.
As shown in Figure \ref{fig:squiggle}, this function is identical to the function from the Cubic Gap Example, but with unexpected variations in the gap.
For the generalization experiment, the test $x$'s were sampled from the non-gap region $[-4,-2]\cup[2,4]$, whereas for the transfer learning experiment, test $x$'s were sampled from the gap region $[-2,2]$.

\paragraph{BayesOpt Examples.}
We used common BayesOpt benchmarks to evaluate the usefulness of our uncertainties.
These benchmarks were adapted from HPOLib 1.5 \citep{eggensperger_toward} and represent a variety of tasks that are difficult or impossible for traditional optimization techniques.

First, we used the Branin function, a 2-dimensional benchmark with multiple global minima and shallow valleys between the minima.
The function is defined as:
\begin{gather*}
    f(\textbf{x}) = \left(x_2 - \frac{5.1}{4\pi^2}x_1^2 + \frac{5}{\pi}x_1 - 6\right)^2 + 10\left(1-\frac{1}{8\pi}\right)\cos \left(x_1\right) + 10
\end{gather*}
The input domain used is the square $x_1 \in [-5, 10]$ and $x_2 \in [0, 15]$.
In this domain, the global minima occur at $\textbf{x}^* = (-\pi, 12.275)$, $(\pi, 2.275)$, and $(9.42478, 2,475)$, with minimum $f(\textbf{x}^*) = 0.397887$.

The Hartmann6 function was also used and is a higher dimensional function on a small domain.
It is defined as:
\begin{equation*}
    f(\textbf{x}) = -\sum_{i=1}^{4}\alpha_i \exp\left( -\sum_{j=1}^{6} A_{ij}\left(x_j - P_{ij} \right)^2 \right)
\end{equation*}
where $\alpha = [1.0, 1.2, 3.0, 3.2]^\intercal$,
\begin{align*}
\mathbf{A} &= \begin{bmatrix}
10 & 3 & 17 & 3.5 & 1.7 & 8 \\
0.05 & 10 & 17 & 0.1 & 8 & 14 \\
3 & 3.5 & 1.7 & 10 & 17 & 8 \\
17 & 8 & 0.05 & 10 & 0.1 & 14
\end{bmatrix}, \\
\mathbf{P} &= 
10^{-4} \begin{bmatrix}
1312 & 1696 & 5569 & 124 & 8283 & 5886 \\
2329 & 4135 & 8307 & 3736 & 1004 & 9991 \\
2348 & 1451 & 3522 & 2883 & 3047 & 6650 \\
4047 & 8828 & 8732 & 5743 & 1091 & 381 
\end{bmatrix}.
\end{align*}
The input domain used is the hypercube $x_i \in (0, 1)$ for all $x_i$, with global minimum 
$f(\textbf{x}^*) = -3.32237$ at
\begin{gather*}
\textbf{x}^* = (0.20169, 0.150011, 0.476874, 0.275332, 0.311652, 0.6573).
\end{gather*}

A popular application for BayesOpt is hyper-parameter tuning.
In our case, we used each model to optimize classification models for MNIST data.
The SVM model was optimized over the regularization parameter and kernel coefficient, where the domain for each parameter is $[-10, 10]$ on the log-scale.
Due to computational complexity, the first 5,000 data points in the training set were used in this experiment. 
Similarly, the logistic benchmark is a logistic regression classifier in which the learning rate, $l_2$-regularization, batch size, and dropout ratio on inputs was tuned.
The learning rate domain is $[-6, 0]$ on the log-scale, the $l_2$-regularization domain is $[0,1]$, the batch size domain is $[20,2000]$, and the dropout ratio domain is $[0. 0.75]$.

All results for the BayesOpt examples are reported in terms of the error, $\left| f(\mathbf{x}) - f(\mathbf{x}^*) \right|$,
where classification global minimum is at 0.


\subsection{Real Data} \label{sec:uci-gap-examples}

\paragraph{UCI Gap Data.}
We used 3 standard UCI~\citep{uci_data} regression data sets and modify them to create 6 ``gap data-sets'', wherein we purposefully created a gap in the data where we can test our model's in-between uncertainty (i.e. we train our model on the non-gap data and test the model's epistemic uncertainty on the gap data).
We adapt the procedure from ~\cite{uci_gap} to convert these UCI data sets into UCI gap data sets. For a selected input dimension,
we (1) sort the data in increasing order in that dimension,
and (2) remove middle $1/3$ to create a gap.
We specifically selected input dimensions that have high correlation with the output in order to ensure that the learned model should have epistemic uncertainty in the gap;
that is, if we select a dimension that is not useful for prediction, any model need not have increased uncertainty in the gap.
The features we selected are:
\begin{itemize}
    \item Boston Housing: ``Rooms per Dwelling'' (RM), ``Percentage Lower Status of the Population'' (LSTAT), and ``Parent Teacher Ratio'' (PTRATIO)
    \item Concrete Compressive Strength: ``Cement'' and ``Superplasticizer''
    \item Yacht Hydrodynamics: ``Froude Number''
\end{itemize}
The not gap region of the data was then split into 10 different 80-10-10 train,test, validation splits.
Final results are computed as the mean and standard deviation of the predictions over all of the splits.

\paragraph{Standard UCI Data.}
We used 6 UCI regression data sets to benchmark our models.
Those are the Boston Housing, Concrete Compressive Strength, Yacht Hydrodynamics, Energy Efficiency, Abalone (Kin8nm), and Red Wine Quality datasets.
For each experiment, we split the data into 90\% train data and 10\% test data.
We then used 20\% of the training data for the validation set.

\subsection{Architecture} \label{sec:training}


\paragraph{Cubic Gap Experiments.} 
We used 2-layer 50 hidden units ReLU networks for all neural network based architectures (except for the BNN with HMC, in which a 1-layer ReLU network was used to reduce computational cost, and except for the BNN with Linearised Laplace, in which we used Tanh activation since ReLU did not work \citep{uci_gap}, and except  RFF SNGP, in which we again used Tanh activation since ReLU did not work). 
We used $M=20$ auxiliary regressors in LUNA, and 50 independent networks for the bootstrap ensemble. 
We used a GP with a kernel that is the sum of a Matern-$5/2$ kernel with length scale $1.0$ and a white kernel with ground truth noise variance $\sigma^2=3^2$.
For SNGP, we used an RBF kernel with length scale of $1.0$, a normalization factor of 20, and dropout rate of 0. 
We trained it using SGD with 1000 epochs, a step size of $10^{-1}$, and no mini-batching.
For the RFF SNGP, we used a scaling coefficient of $10^{-8}$, normalizing factor of $10.0$, GP layer width of $500$, regularization parameter of $10^{-6}$ and dropout rate of 0. 
We again trained it using SGD with 50 epochs, a step size of $10^{-3}$, and a mini-batch size of 10. 
We found that training longer, or with higher scaling coefficient led to instability in the posterior covariance prediction.
We trained an Anchored Ensemble consisting of 5 networks, using an anchor mean of 0, anchor variance of 10, data variance of 3, and alpha of 0.5. 
We trained using SGD with 500 epochs, a step size of $2 \times 10^{-5}$, and a mini-batch size of 32.


\paragraph{Squiggle Gap Experiments.}
We use 2-layer ReLU networks with 50 units in the first hidden layer and variable units in the second hidden layer, depending on the experiment. LUNA used $M=20$ auxiliary regressors.
With the RFF SNGP model, we use a 2-layer 50 unit model with Tanh activation, regularization of $10^{-6}$, a dropout rate of 0, normalization factor of 10, and a scaling coefficient of $10^{-6}$.



\paragraph{BayesOpt Experiments.}
For the BayesOpt experiments, the neural networks had 3 hidden layers with 50 ReLU units in each layer,  following the setup of \cite{Snoek}.
LUNA used $M=50$ auxiliary regressors.
RFF SNGP used a GP width of 200 units and and Tanh activation. SNGP used an RBF kernel with length scale of 1.0.

\paragraph{Detecting Sampling Bias Experiments.}
For the sampling bias in data detection experiment, the images were first put through a pre-trained ResNet18 to obtain 1000-dimensional features. A single hidden layer neural network with 500 ReLU units was then fit on these features using LUNA training. For the experiment where female faces were held out from the train set, reasonable values of $\log{\gamma} = -1$, $\log{\lambda} = -2$ and $\log{\alpha} = 6$ were used without any tuning. For the experiment where faces of ages 30-40 were held out from the train set, reasonable values of $\log{\gamma} = -1$, $\log{\lambda} = 2$ and $\log{\alpha} = 6$ were used without any tuning. Training for both experiments were done for 10 epochs using a mini-batch size of 1024.

\subsection{Hyper-parameters}\label{sec:hyperparams}

We selected hyper-parameters for all models using $20\%$ of the training data as a held-out validation set. For all baselines (except for LUNA, which has its own model selection criteria), we selected hyper-parameters based on validation log-likelihood. 

\paragraph{Synthetic Data}
The ground truth noise variance $\sigma^2$ was used in all models.

For NLM, the regularization hyper-parameter $\gamma$ was selected by maximizing validation log-likelihood using 50 iterations of Bayesian optimization over the range $\gamma\in[10^{-2}, 10^2]$, initialized with 10 iterations of random search.

For LUNA, the regularization hyper-parameter $\gamma$ and the diversity hyper-parameter $\lambda$ were selected using using 50 iterations of Bayesian optimization over the range $\gamma\in[10^{-2}, 10^2]$ and $\lambda\in[10^{-2}, 10^2]$, initialized with 10 iterations of random search.

For MCD, the dropout rate $p$ was selected using 50 iterations of Bayesian optimization over the range $p\in[10^{-3}, 0.5]$, initialized with 10 iterations of random search.
The model precision $\tau$ was set to the inverse of the ground truth noise variance $\sigma^2$. 
A reasonable $1000$ forward passes were used to obtain model uncertainty.

For Anchored Ensembles, we hand tuned the hyper-parameters to obtain good in-between uncertainty.
We used hyper-parameters an anchor mean of 0, an anchor variance of 10, a data variance of 3, and an initial variance of 0.5.

For bootstrap ensembles, we used 50 neural nets with no regularization.

For SNGP, we hand tuned hyper-parameters to obtain better in-between uncertainty, we used $c = 20$, and no dropout.

For RFF SNGP, we hand tuned hyper-parameters to obtain better in-between uncertainty. 
We used Tanh activation, a GP layer width of 500, $m = 10^{-8}$, $\lambda = 10^{-6}$, $c = 10$.

\paragraph{Real Data}
We first fit a maximum a posteriori (MAP) model to the data sets and use the variance of the output errors as the noise variance $\sigma^2$ in all models.

For NLM, the regularization hyper-parameter $\gamma$ was selected by maximizing validation log-likelihood using 50 iterations of Bayesian optimization over the range $\gamma\in[10^{-2}, 10^2]$, initialized with 10 iterations of random search.

For LUNA, the regularization hyper-parameter $\gamma$ and the diversity hyper-parameter $\lambda$ were selected using using 50 iterations of Bayesian optimization over the range $\gamma\in[10^{-2}, 10^2]$ and $\lambda\in[10^{-2}, 10^2]$, initialized with 10 iterations of random search.


For MCD, the dropout rate $p$ was selected between $p\in\{0.005, 0.05\}$ by maximizing validation log-likelihood, following the specification in \citep{mcd}.
Since the other models used the same learned noise variance $\sigma^2$, we set the model precision $\tau$ to the inverse of this $\sigma^2$. 
A reasonable $1000$ forward passes were used to obtain model uncertainty.

For Anchored Ensembles, 5 networks with anchor mean 0 were used in each case.
Initially, a grid search in log space was used, where anchor variance, data variance, and initial variance were searched in $\{10^{-2}, 10^{-1}, 10^0, 10^1\}$.
Training was done with SGD for 50,000 epochs and a learning rate of $10^{-9}$ and no batches.
This learning rate was chosen help with numerical stability.
This training resulted in poor fit to the data.
After hand-tuning, we arrived at an anchor variance of 1.0, data variance of 1.0, and initial variance of 0.5, trained with SGD for 10,000 epochs, a step size of $10^{-5}$, and no batches.

For Bootstrapped Ensembles, 25 networks were used with no regularization.

For SNGP, we had to hand-tune the parameters.
We used the RBF kernel with length scale 1.0, $c = 5$ and no dropout.
For training, we used SGD with 5000 epochs, a step size of $10^{-2}$, and no batches.

For RFF SNGP, after searching over $c \in \{ 1, 5, 10 \}$, dropout rate of $\{0.01, 0.05, 0.2  \}$, $m=10^{-8}$ with ReLU activation, we found the model did not fit the data.
We then used Tanh activation and a GP layer width of 200, scaling coefficient of $10^{-4}$, normalizing factor of 10.0, regularization of $10^{-6}$, and dropout rate of 0 for all datasets. It was trained using SGD for 100 epochs with step size of $10^{-3}$ and batch size of 10.

\paragraph{BayesOpt} BayesOpt hyper-parameters were generally chosen with a grid search in log-space.
Each run was initialized with different randomly selected points.
We used 5 points for the Branin function, 10 points for the Hartmann6 function, 3 points for the SVM function, and 3 points for the logistic function.
Additionally, early stopping was found to both improve results and significantly reduce training time for many of the models.
Note: many of the models achieved near-optimal performance after very little tuning. Because of this, only those values were tested.


For LUNA, on all functions, we searched over $\alpha \in \{10^{-1}, 10^0, 10^1, 10^2 \}$, $\gamma \in \{10^{-2}, 10^{-1}, 10^0, 10^1\}$ and $\lambda \in \{10^{-2}, 10^{-1}, 10^0, 10^1\}$.

For NLM, on all functions, we searched over $\alpha \in \{1, 5, 10 \}$ and $\lambda \in \{10^{-4}, 10^{-3}, 10^{-2} \}$.

For MCD, on all functions, we searched over $p \in \{0.05, 0.1, 0.2, 0.5 \}$, $\tau \in \{10^{-4}, 10^{-3}, 10^{-2} \}$, with 50 forward passes.

For anchored ensembles, we searched for anchor variance, data variance, and initial variance.
Initially, the anchor variance search domain was $\{10, 50, 100\}$, the data variance search domain was $\{0.1, 1.0\}$, and initial variance search domain was $\{10, 50 \}$ for the Branin and Hartmann6 functions.

For bootstrapped ensembles, we searched the regularization parameter over $\lambda \in \{0, 10^{-4}, 10^{-3}, 10^{-2}, 10^{-1}\}$.

For the GP, we used an RBF kernel and performed grid search for length scale.
For all benchmarks the domain used was $\{0.1, 1.0, 5.0, 10.0\}$

For SNGP, we used an RBF kernel with length scale 1, and for all functions, used $c \in \{1, 5, 10 \}$ and dropout rate was $\{0.01, 0.1, 0.2 \}$

For RFF SNGP, we used $c \in \{1, 5, 10 \}$, dropout rate was $\{0.01, 0.1, 0.2 \}$, $\lambda \in \{10^{-4}, 10^{-3}, 10^{-2} \}$. We set $m = 10^{-8}$.

\FloatBarrier
\section{Experimental Results}
\subsection{Qualitative Evaluation of NLM Predictive Uncertainty Across Random Restarts}\label{sec:appendix_toy_results}


In this section we present a qualitative evaluation of the NLM predictive uncertainty, specifically examining the effect of regularization as well as the depth and width of the feature basis across random restarts.
We demonstrate that the NLM training objective, as justified by our theoretical analysis, either explicitly discourages expressing or is not consistently able to express in-between uncertainty. 

\paragraph{Effect of Regularization on NLM Posterior Predictive Uncertainty.}
In Figure \ref{fig:rr_reg}, we show NLM prior predictive samples and posterior predictive distributions, with and without regularization, across random restarts.
With regularization, NLM is unable to get expressive prior predictive samples and hence increased in-between posterior predictive uncertainty. 
While this can be achieved with no regularization, since the training objective does not explicitly encourage for diversity in the prior predictive, the posterior predictive does not consistently express in-between uncertainty across random restarts.
 
While in low dimensional data we can visually select for the random restarts for which the NLM prior predictive is diverse, on a real high-dimensional data, we cannot. 
Moreover, on such data we do not know where the data-scarce regions are and thus where to expect in-between uncertainty, since we have shown in Section \ref{sec:nlm_pathologies} that we cannot use log-likelihood to evaluate uncertainty. 

\paragraph{Effect of the NLM Basis Dimensionality on NLM Posterior Predictive Uncertainty.}
In Figure \ref{fig:rr_features}, we show the NLM posterior predictive distributions across a varying numbers of basis features $L$ and across random restarts, and in Figure \ref{fig:rr_depth} we show the NLM posterior predictive across a varying numbers of hidden layers and across random restarts. We see that increasing number of features is more important than increasing depth for expressing posterior predictive in-between uncertainty.

\paragraph{Effect of Regularization and Prior Variances on Marginal Likelihood NLM Training on Posterior Predictive Uncertainty.}
We show NLM prior predictive samples and posterior predictive distributions for marginal data likelihood training (described in Appendix \ref{sec:marginal_ll}). This is shown for different regularization and prior variances across random restarts in Figures \ref{fig:marg_lowgamma} and \ref{fig:marg_highgamma}. We see the same trends here as what was observed above with traditional MAP training: (1) with high regularization (high $\gamma$ or low $\alpha$), the feature bases for the learned $\theta$ do not span diverse functions in the prior predictive and hence cannot capture in-between uncertainty, and (2) with low regularization (low $\gamma$ or high $\alpha$), the model has the potential to capture in-between uncertainty, but it does so inconsistently across random restarts and needs to rely on good initialization.

\begin{figure*}
    \centering
    \includegraphics[width=\textwidth]{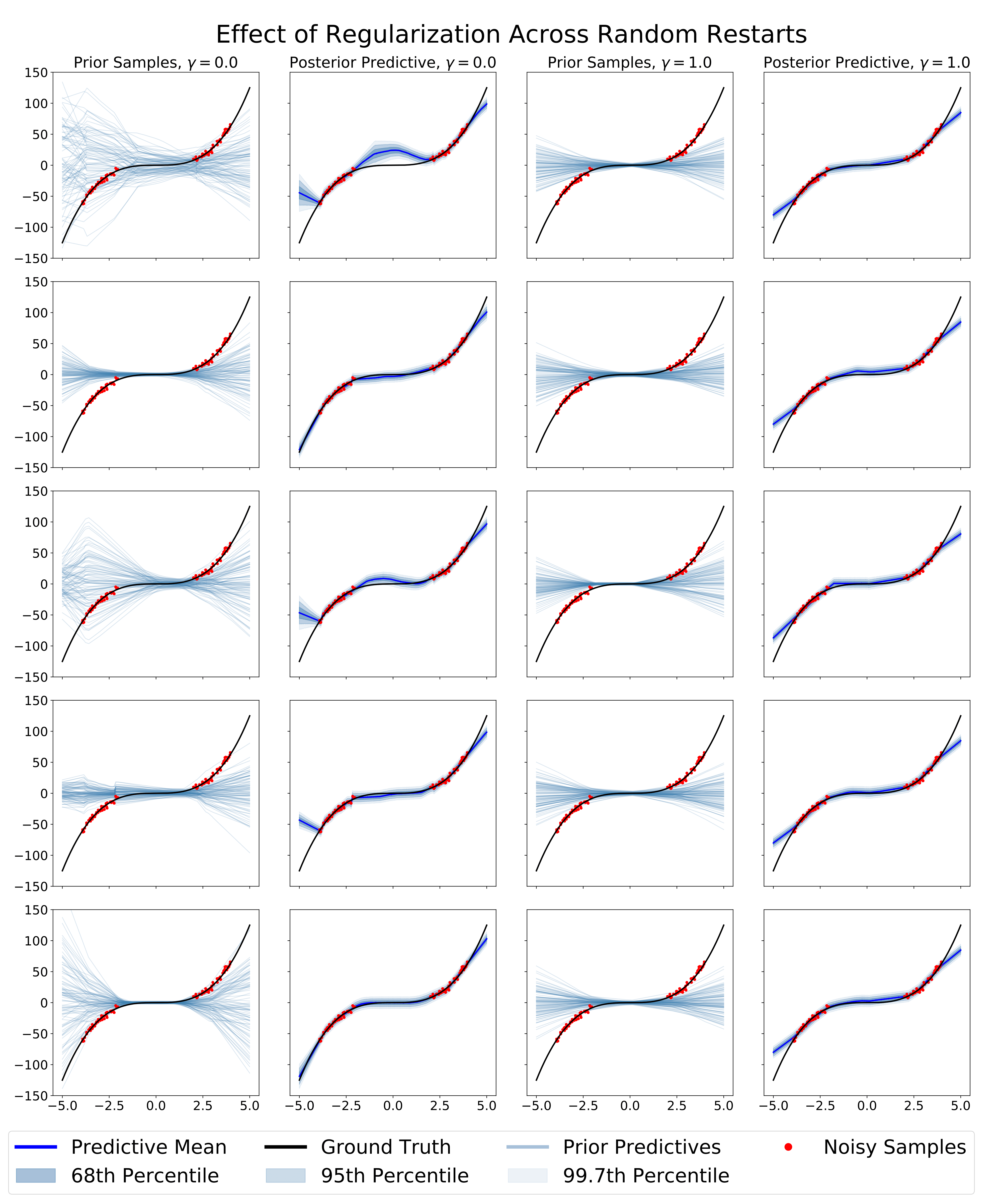}
    \caption{\textbf{MAP training discourages functional diversity under the prior predictive, and therefore cannot capture in-between uncertainty.} We used MAP training to train an NLM with a 2-layer ReLU network with 50 and 20 neurons in the first and second layers respectively (i.e. 20 features). With no regularization and very noisy priors, the NLM models in-between uncertainty, albeit inconsistently. With regularization, we see the priors are not expressive enough and the NLM fails to ever capture in-between uncertainty.}
    \label{fig:rr_reg}
\end{figure*}

\begin{figure*}
    \centering
    \includegraphics[width=\textwidth]{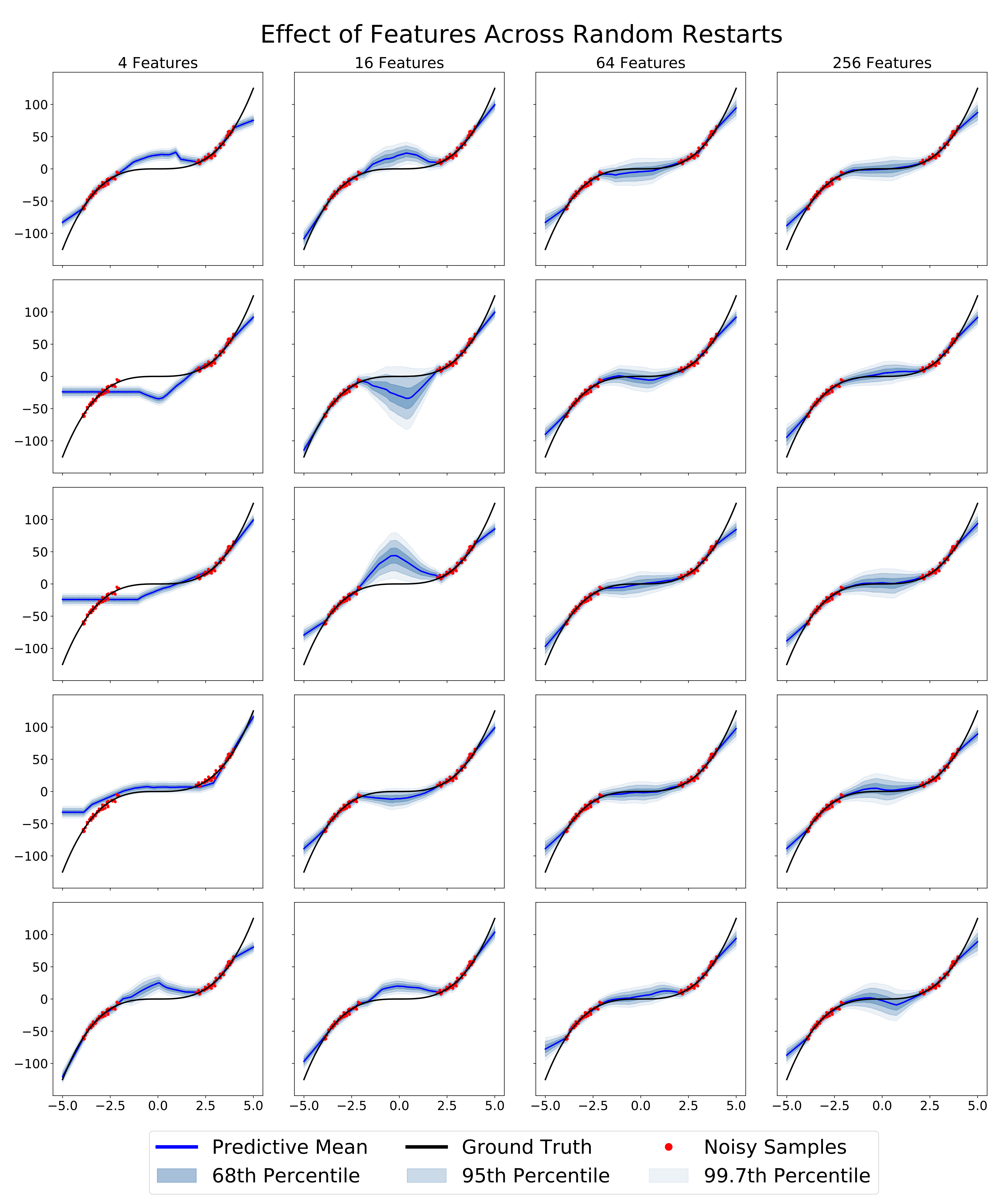}
    \caption{\textbf{Increasing the number of features of the NLM does not help MLE training capture in-between uncertainty consistent.} We used MAP training to train an NLM with a 2-layer ReLU network with 50 neurons in the first layer, varying the number of features in the second layer. We see clearly as model capacity increases, the NLM better fits the data better. However, this increased capacity still fails to consistently model in-between uncertainty across random restarts.}
    \label{fig:rr_features}
\end{figure*}

\begin{figure*}
    \centering
    \includegraphics[width=\textwidth]{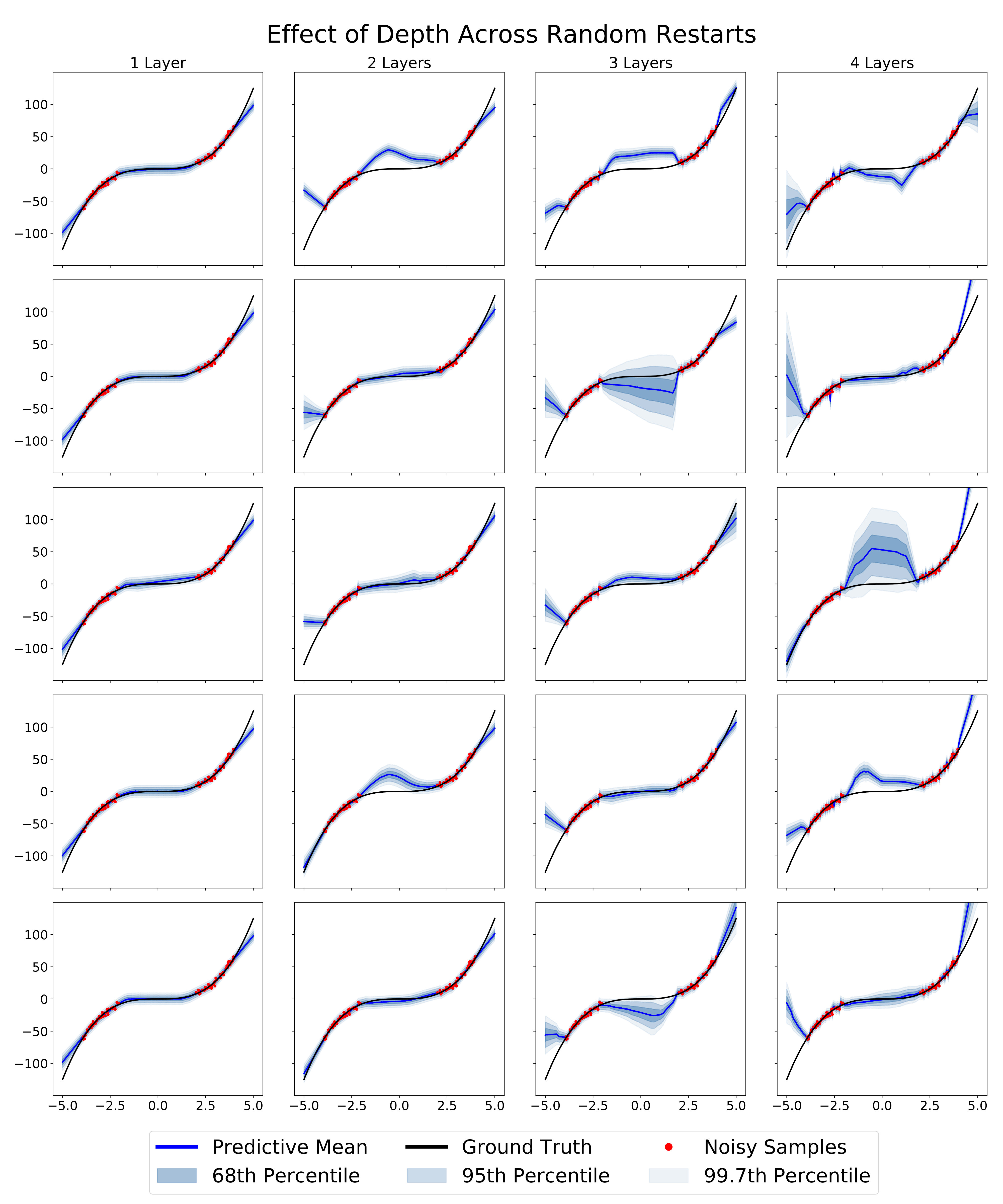}
    \caption{\textbf{Increasing the depth of an NLM does not help MAP training capture in-between uncertainty consistently.} We used MAP training to train an NLM with varying depth. We see that NLM is able to capture more complex functions as the depth increases, but this increased capacity does not lead to consistent in-between uncertainty across random restarts. Additionally, deep NLM are more susceptible to overfitting.}
    \label{fig:rr_depth}
\end{figure*}

\begin{figure*}
    \centering
    \includegraphics[width=\textwidth]{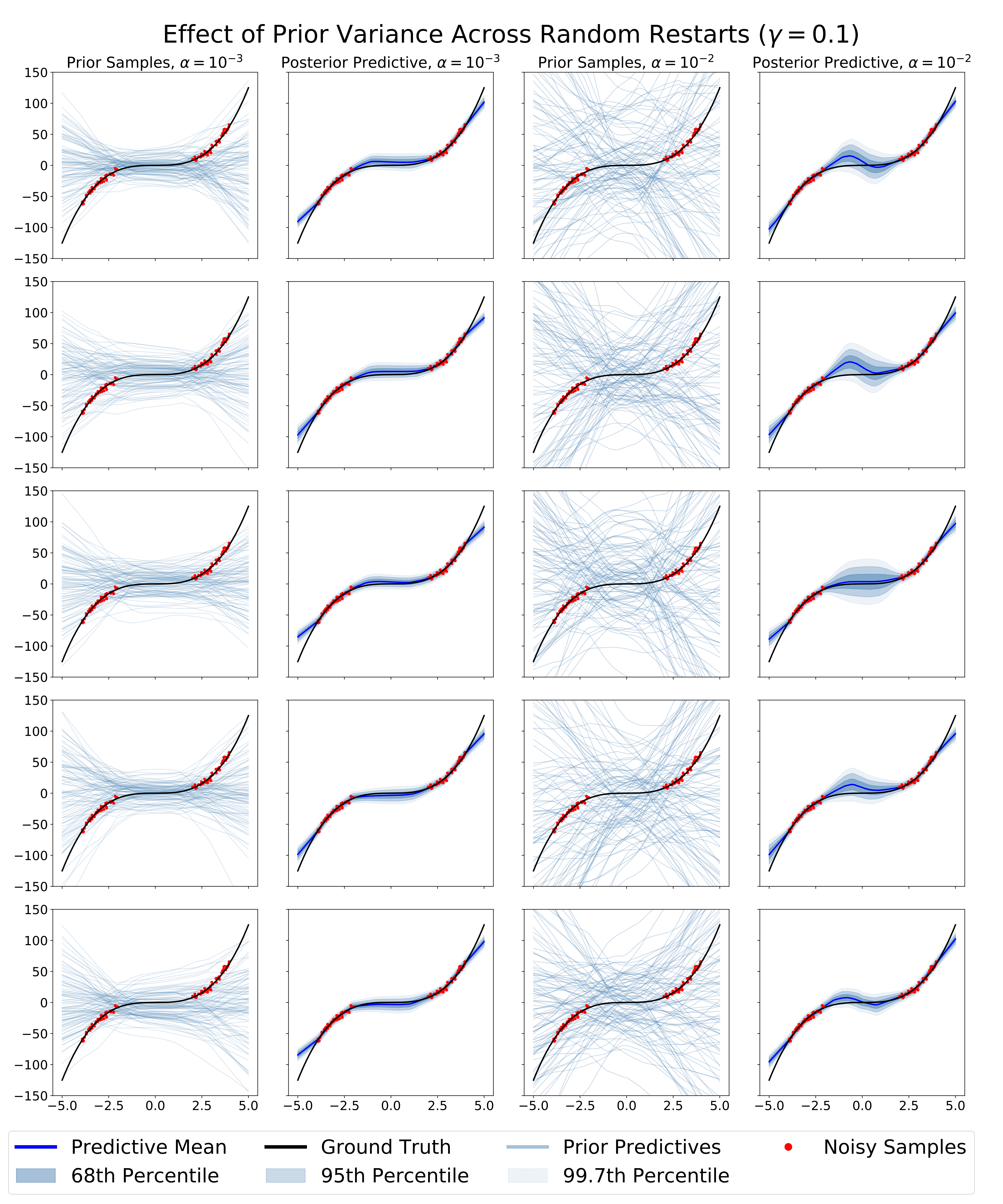}
    \caption{\textbf{Increasing the variance of the prior over the weights does not help capture in-between uncertainty consistently ($\gamma=0.1$).} We used marginal likelihood training and $\gamma=0.1$ to train an NLM with a 2-layer ReLU network with 50 and 20 neurons in the first and second layers respectively (i.e. 20 features). We see that the NLM is able to capture higher in-between uncertainty when $\alpha$ is high enough, but is inconsistent in doing so across random restarts.}
    \label{fig:marg_lowgamma}
\end{figure*}

\begin{figure*}
    \centering
    \includegraphics[width=\textwidth]{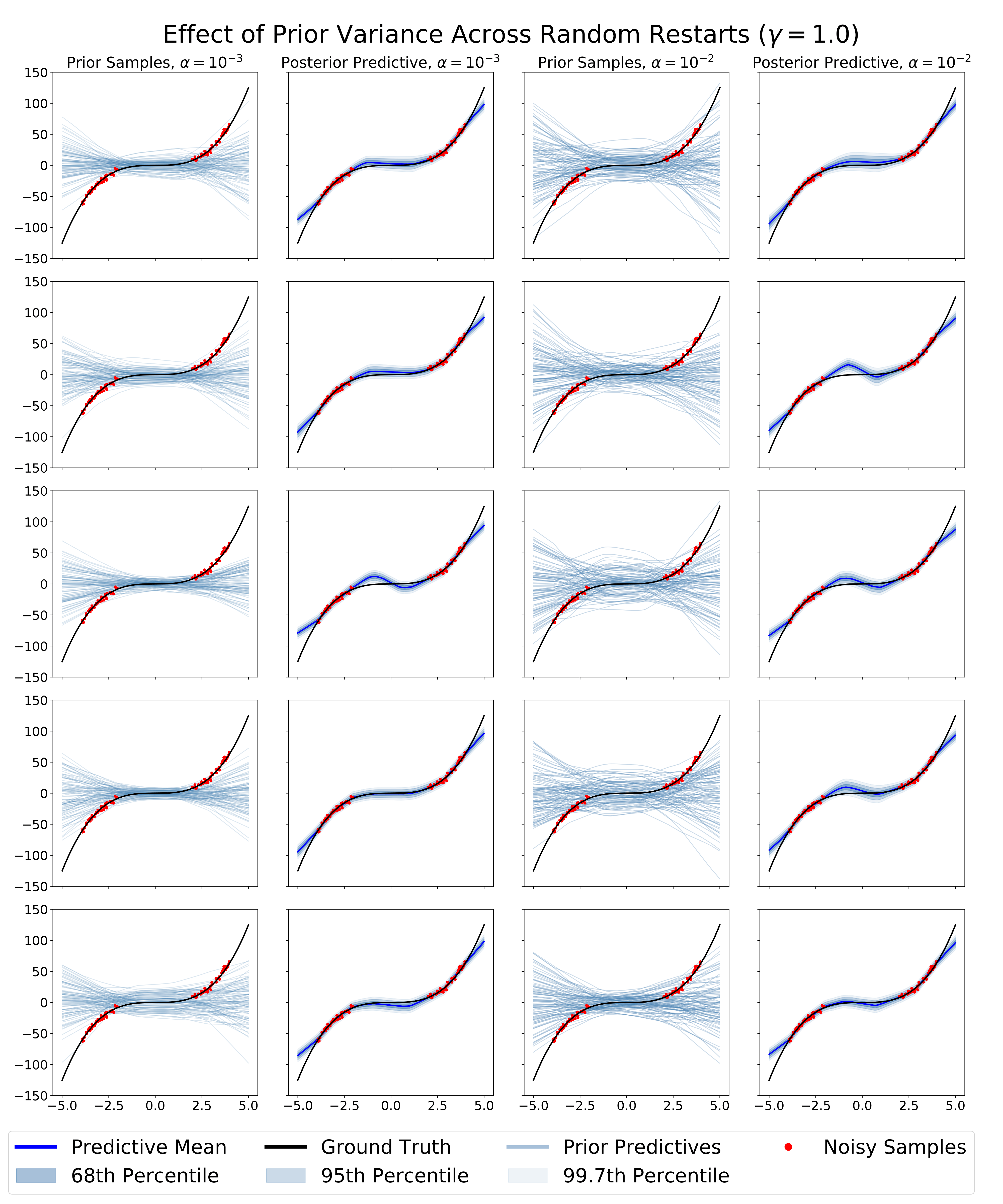}
    \caption{\textbf{Increasing the variance of the prior over the weights does not help capture in-between uncertainty consistently ($\gamma=1.0$).} We used marginal likelihood training and $\gamma=1.0$ to train an NLM with a 2-layer ReLU network with 50 and 20 neurons in the first and second layers respectively (i.e. 20 features). We see that the NLM is able to capture higher in-between uncertainty when $\alpha$ is high enough, but is inconsistent in doing so across random restarts.}
    \label{fig:marg_highgamma}
\end{figure*}

\begin{figure*}
    \centering
    \includegraphics[width=\textwidth]{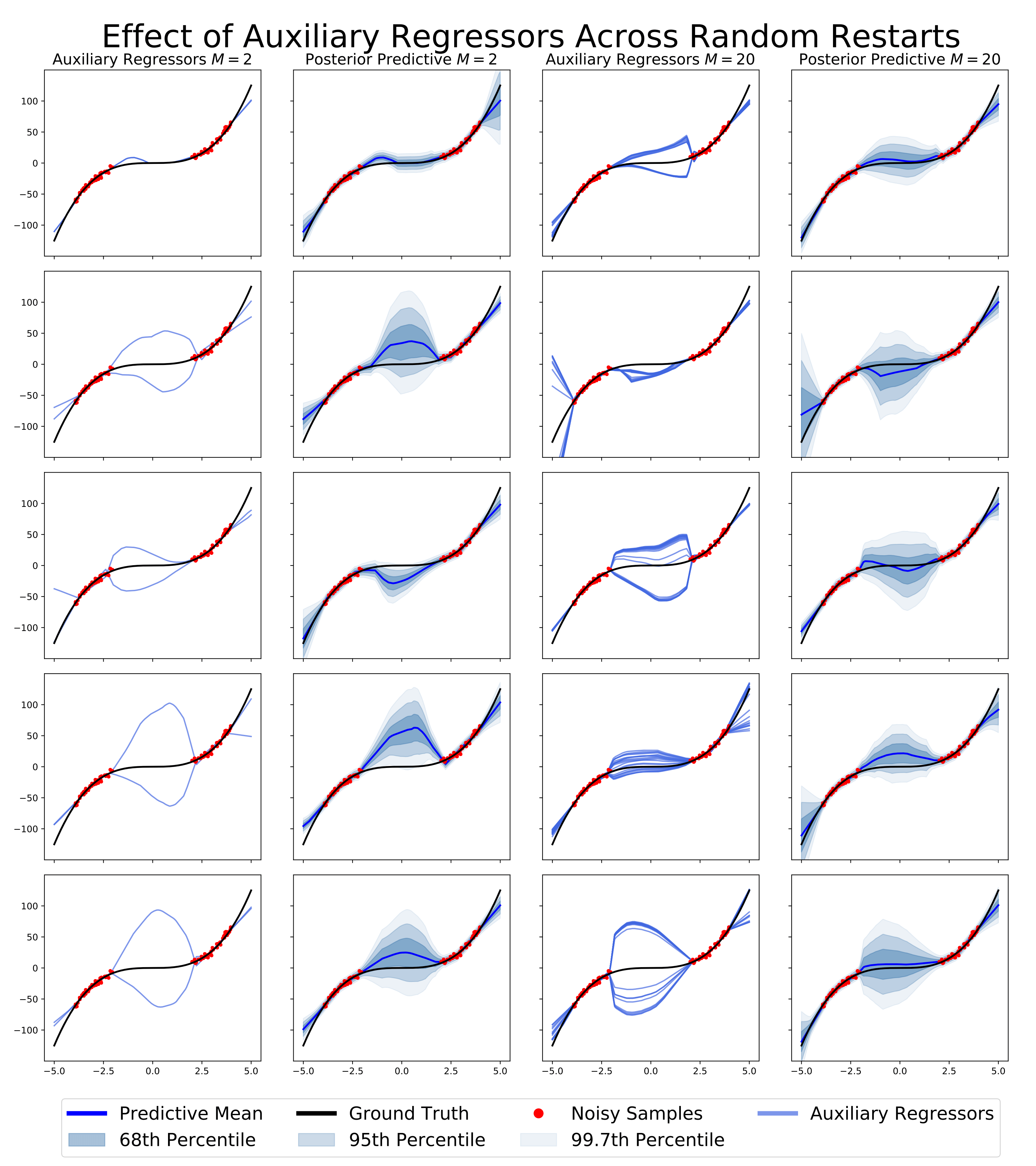}
    \caption{\textbf{LUNA consistently captures in-between uncertainty.} We used LUNA to train an NLM with a 2-layer ReLU network with 50 and 20 neurons in the first and second layers respectively (i.e. 20 features). We see that LUNA encourages for functional diversity under the prior predictive, and as a result is able to capture in-between uncertainty consistently across random restarts.}
    \label{fig:aux_rr}
\end{figure*}

\subsection{Additional Quantitative Results} \label{sec:additional-quantitative-results}

Tables \ref{tab:uci_regression_ll} and Tables \ref{tab:uci_regression_rmse} compare LUNA's log-likelihood and RMSE on the standard (non-gap) UCI data. As the tables show, LUNA has comparably good fit on these data-sets relative to baselines.

\begin{table*}
    \centering
    \text{Avg. Log-Likelihood (Test)}
    \resizebox{\textwidth}{!}{
    \begin{tabular}{ccccccc}
        \toprule
        & Boston & Concrete & Yacht & Kin8nm & Energy & Wine \\ 
		\cmidrule(lr){2-2} \cmidrule(lr){3-3} \cmidrule(lr){4-4} \cmidrule(lr){5-5} \cmidrule(lr){6-6} \cmidrule(lr){7-7}
		ENS BOOT & N/A & N/A & N/A & N/A & N/A & N/A \\
		ENS VAN & N/A & N/A & N/A & N/A & N/A & N/A \\
		ENS ANC & N/A & N/A & N/A & N/A & N/A & N/A \\
		\cmidrule{1-7}
		NLM & -3.67 $\pm$ 0.01 & -5.33 $\pm$ 0.00 & -2.32 $\pm$ 0.01 & 1.03 $\pm$ 0.03 & -2.78 $\pm$ 0.00 & -1.02 $\pm$ 0.03 \\
		GP & -3.69 $\pm$ 0.02 & -5.34 $\pm$ 0.00 & -2.69 $\pm$ 0.13 & 0.91 $\pm$ 0.03 & -2.86 $\pm$ 0.01 & -1.03 $\pm$ 0.04 \\
		MCD & -3.67 $\pm$ 0.01 & -5.32 $\pm$ 0.00 & -2.31 $\pm$ 0.01 & 0.93 $\pm$ 0.04 & -2.77 $\pm$ 0.00 & -1.02 $\pm$ 0.04 \\
		SNGP & -3.66 $\pm$ 0.01 & -5.33 $\pm$ 0.00 & -2.34 $\pm$ 0.03  & 0.86 $\pm$ 0.15 & -2.79 $\pm$ 0.02 & -1.05 $\pm$ 0.05 \\
		BBVI & -3.77 $\pm$ 0.02 & -5.37 $\pm$ 0.00 & -2.61 $\pm$ 0.07 & 0.97 $\pm$ 0.02 & -2.90 $\pm$ 0.01 & -1.08 $\pm$ 0.04 \\
		\cmidrule{1-7}
		LUNA & -3.67 $\pm$ 0.01 & -5.33 $\pm$ 0.00 & -2.31 $\pm$ 0.01 & 1.02 $\pm$ 0.03 & -2.79 $\pm$ 0.00 & -1.05 $\pm$ 0.06 \\
	\end{tabular}}
	\caption{\textbf{LUNA has comparable log-likelihood with baselines on standard (non-gap) UCI data.}}
	\label{tab:uci_regression_ll}
\end{table*}

\begin{table*}
    \centering
    \text{Root Mean Square Error (Test)}
    \resizebox{\textwidth}{!}{
    \begin{tabular}{ccccccc}
        \toprule
        & Boston & Concrete & Yacht & Kin8nm & Energy & Wine \\ 
		\cmidrule(lr){2-2} \cmidrule(lr){3-3} \cmidrule(lr){4-4} \cmidrule(lr){5-5} \cmidrule(lr){6-6} \cmidrule(lr){7-7}
		ENS BOOT & 2.86 $\pm$ 0.97 & 4.68 $\pm$ 0.50 & 0.81 $\pm$ 0.38 & 0.08 $\pm$ 0.00 & 0.50 $\pm$ 0.07 & 0.57 $\pm$ 0.06 \\
		ENS VAN & 2.78 $\pm$ 0.91 & 4.48 $\pm$ 0.57 & 0.53 $\pm$ 0.24 & 0.08 $\pm$ 0.00 & 0.42 $\pm$ 0.06 & 0.58 $\pm$ 0.07 \\
		ENS ANC & 2.80 $\pm$ 0.80 & 4.61 $\pm$ 0.52 & 0.67 $\pm$ 0.23 & 0.08 $\pm$ 0.00 & 0.53 $\pm$ 0.06 & 0.58 $\pm$ 0.06 \\
		\cmidrule{1-7}
		NLM & 3.11 $\pm$ 0.93 & 4.68 $\pm$ 0.65 & 0.55 $\pm$ 0.30 & 0.08 $\pm$ 0.00 & 0.37 $\pm$ 0.06 & 0.59 $\pm$ 0.04 \\
		GP & 4.21 $\pm$ 1.22 & 11.94 $\pm$ 0.40 & 3.45 $\pm$ 0.90 & 0.09 $\pm$ 0.01 & 2.58 $\pm$ 0.23 & 0.60 $\pm$ 0.05 \\
		MCD & 2.94 $\pm$ 0.86 & 4.36 $\pm$ 0.68 & 0.58 $\pm$ 0.22 & 0.09 $\pm$ 0.01 & 0.40 $\pm$ 0.08 & 0.58 $\pm$ 0.05 \\
		SNGP & 3.06 $\pm$ 0.93 & 5.00 $\pm$ 0.50 & 1.10 $\pm$ 0.43  & 0.10 $\pm$ 0.01 & 0.87 $\pm$ 0.50 & 0.62 $\pm$ 0.07 \\
		BBVI & 5.31 $\pm$ 1.29 & 16.45 $\pm$ 0.62 & 1.91 $\pm$ 0.51 & 0.09 $\pm$ 0.00 & 2.39 $\pm$ 0.16 & 0.64 $\pm$ 0.05 \\
		\cmidrule{1-7}
		LUNA & 3.18 $\pm$ 1.00 & 4.70 $\pm$ 0.56 & 0.51 $\pm$ 0.20 & 0.08 $\pm$ 0.00 & 0.40 $\pm$ 0.06 & 0.62 $\pm$ 0.08 \\
	\end{tabular}}
	\caption{\textbf{LUNA has comparable RMSE with baselines on standard (non-gap) UCI data.}}
	\label{tab:uci_regression_rmse}
\end{table*}

\begin{table*}
    \centering
    \text{Avg. Epistemic Uncertainty}
    \resizebox{\textwidth}{!}{
    \begin{tabular}{ccccccccccccc}
        \toprule
        & \multicolumn{2}{c}{Yacht - FROUDE} & \multicolumn{2}{c}{Concrete - CEMENT} 
        & \multicolumn{2}{c}{Concrete - SUPER} & \multicolumn{2}{c}{Boston - RM}
        & \multicolumn{2}{c}{Boston - LSTAT} & \multicolumn{2}{c}{Boston - PTRATIO} \\
		\cmidrule(lr){2-3} \cmidrule(lr){4-5} \cmidrule(lr){6-7} \cmidrule(lr){8-9} \cmidrule(lr){10-11} \cmidrule(lr){12-13}
		& Not Gap & Gap & Not Gap & Gap & Not Gap & Gap & Not Gap & Gap & Not Gap & Gap & Not Gap & Gap \\
		\cmidrule{2-13}
		ENS BOOT & 0.73 $\pm$ 0.11 & 0.58 $\pm$ 0.03 & 3.63 $\pm$ 0.24 & 5.10 $\pm$ 0.15 & 4.46 $\pm$ 0.39 & 8.32 $\pm$ 0.23 & 1.54 $\pm$ 0.18 & 1.38 $\pm$ 0.04 & 1.59 $\pm$ 0.20 & 1.67 $\pm$ 0.08 & 1.61 $\pm$ 0.17 & 1.58 $\pm$ 0.08 \\
		ENS VAN & 0.39 $\pm$ 0.03 & 0.49 $\pm$ 0.02 & 2.10 $\pm$ 0.23 & 4.58 $\pm$ 0.21 & 1.92 $\pm$ 0.20 & 5.01 $\pm$ 0.15 & 0.92 $\pm$ 0.12 & 0.85 $\pm$ 0.02 & 0.92 $\pm$ 0.10 & 1.09 $\pm$ 0.03 & 0.90 $\pm$ 0.14 & 1.23 $\pm$ 0.04 \\
		ENS ANC & 0.60 $\pm$ 0.23 & 0.63 $\pm$ 0.19 & 2.76 $\pm$ 0.38 & 5.25 $\pm$ 0.29 & 2.53 $\pm$ 0.36 & 6.23 $\pm$ 0.49 & 1.18 $\pm$ 0.16 & 1.06 $\pm$ 0.05 & 1.10 $\pm$ 0.10 & 1.20 $\pm$ 0.08 & 1.13 $\pm$ 0.19 & 1.44 $\pm$ 0.07 \\
		\cmidrule{1-13}
		NLM & 0.12 $\pm$ 0.01 & 0.15 $\pm$ 0.02 & 0.77 $\pm$ 0.05 & 0.73 $\pm$ 0.04 & 0.74 $\pm$ 0.05 & 0.78 $\pm$ 0.04 & 0.37 $\pm$ 0.03 & 0.34 $\pm$ 0.03 & 0.90 $\pm$ 0.07 & 0.85 $\pm$ 0.06 & 0.76 $\pm$ 0.07 & 0.73 $\pm$ 0.04 \\
		GP & 0.94 $\pm$ 0.13 & 1.59 $\pm$ 0.10 & 3.04 $\pm$ 0.25 & 5.31 $\pm$ 0.07 & 2.76 $\pm$ 0.27 & 6.01 $\pm$ 0.13 & 1.88 $\pm$ 0.14 & 1.57 $\pm$ 0.02 & 1.76 $\pm$ 0.15 & 1.71 $\pm$ 0.06 & 1.93 $\pm$ 0.25 & 2.23 $\pm$ 0.08 \\
		MCD & 1.61 $\pm$ 0.23 & 0.75 $\pm$ 0.08 & 1.36 $\pm$ 0.07 & 1.45 $\pm$ 0.03 & 1.29 $\pm$ 0.08 & 1.44 $\pm$ 0.03 & 0.79 $\pm$ 0.08 & 0.65 $\pm$ 0.02 & 0.80 $\pm$ 0.07 & 0.70 $\pm$ 0.04 & 0.81 $\pm$ 0.07 & 0.77 $\pm$ 0.03 \\
		SNGP & 0.07 $\pm$ 0.01 & 0.10 $\pm$ 0.05 & 0.33 $\pm$ 0.03 & 0.31 $\pm$ 0.03 & 0.28 $\pm$ 0.04 & 0.27 $\pm$ 0.04 & 0.29 $\pm$ 0.04 & 0.26 $\pm$ 0.04 & 0.56 $\pm$ 0.05 & 0.52 $\pm$ 0.06 & 0.29 $\pm$ 0.03 & 0.27 $\pm$ 0.02 \\
		BBVI & 1.60 $\pm$ 0.11 & 1.43 $\pm$ 0.02 & 2.79 $\pm$ 0.16 & 2.28 $\pm$ 0.09 & 6.34 $\pm$ 0.35 & 7.00 $\pm$ 0.70 & 2.48 $\pm$ 0.20 & 2.13 $\pm$ 0.05 & 2.50 $\pm$ 0.18 & 1.70 $\pm$ 0.08 & 5.70 $\pm$ 0.38 & 6.84 $\pm$ 0.45 \\
		\cmidrule{1-13}
        LUNA & 0.44 $\pm$ 0.10 & 0.68 $\pm$ 0.14 & 1.32 $\pm$ 0.15 & 2.05 $\pm$ 0.37 & 1.45 $\pm$ 0.31 & 7.29 $\pm$ 2.61 & 1.12 $\pm$ 0.09 & 1.00 $\pm$ 0.07 & 1.81 $\pm$ 0.25 & 2.30 $\pm$ 0.22 & 1.14 $\pm$ 0.09 & 1.83 $\pm$ 0.52 \\
	\end{tabular}}
	\caption{\textbf{On UCI ``gap'' data, LUNA captures higher in-gap epistemic uncertainty
where baselines struggle.} This table shows epistemic uncertainty computed both in the gap and outside of the gap. That is, it shows the raw epistemic uncertainty values used to compute the Epistemic Uncertainty Relative Change in Table \ref{tab:uci-gap-epistemic}.}
	\label{tab:uci_gap_epistemic_values}
\end{table*}

\end{document}